\newtheorem{prop}{Proposition}
\newtheorem{definition}{Definition}[section]
\tikzstyle{disent_latent} = [circle,pattern=north east lines, pattern color=black!20,draw=black,inner sep=1pt,
\DeclareMathOperator{\expect}{\mathbb{E}}
\DeclareMathOperator{\ELBO}{\mathcal{L}}
\DeclareMathOperator{\KL}{\mathrm{KL}}
\DeclareMathOperator{\intd}{\mathrm{d}}
\DeclarePairedDelimiterX\MeijerM[3]{\lparen}{\rparen}%
{\begin{smallmatrix}#1 \\ #2\end{smallmatrix}\delimsize\vert\,#3}
\DeclareMathOperator*{\argmax}{arg\,max}
\DeclareMathOperator*{\argmin}{arg\,min}
\newcommand{\vecto}[1]{\boldsymbol{\mathbf{#1}}}
\renewcommand{\v}{\vecto}
\newcommand{\intdx}{\mathbin{{\intd}{\v{x}}}}
\newcommand{\intdz}{\mathbin{{\intd}{\v{z}}}}
\begin{document}
\runningtitle{Towards a Theoretical Understanding of the Robustness of Variational Autoencoders}
\runningauthor{Alexander Camuto, Matthew Willetts, Stephen Roberts, Chris Holmes, Tom Rainforth}

\twocolumn[

\aistatstitle{Towards a Theoretical Understanding of the \\ Robustness of Variational Autoencoders}

\aistatsauthor{Alexander Camuto$^{1,2}$ \hspace{1.8cm} Matthew Willetts$^{1,2}$}
\aistatsauthor{Stephen Roberts$^{1,2}$ \hspace{1.2cm}
Chris Holmes$^{1,2}$ \hspace{1.2cm} Tom Rainforth$^{1}$}
\aistatsaddress{${}^{1}$University of Oxford \hspace{1cm} ${}^{2}$Alan Turing Institute}]

\begin{abstract}
We make inroads into understanding the robustness of Variational Autoencoders (VAEs) to adversarial attacks and other input perturbations.
While previous work has developed algorithmic approaches to attacking and defending VAEs, there remains a lack of formalization for what it means for a VAE to be robust.
To address this, we develop a novel criterion for robustness in probabilistic models: $r$-robustness.
We then use this to construct the first theoretical results for the robustness of VAEs, deriving margins in the input space for which we can provide guarantees about the resulting reconstruction.
Informally, we are able to define a region within which any perturbation will produce a reconstruction that is similar to the original reconstruction.
To support our analysis, we show that VAEs trained using disentangling methods not only score well under our robustness metrics, but that the reasons for this can be interpreted through our theoretical results.

\end{abstract}

\section{Introduction}
\label{intro}

Variational Autoencoders (VAEs)~\citep{Rezende2014, Kingma2013} have been found to be more robust to input perturbations than their deterministic counterparts, particularly those originating from adversarial attacks \citep{Willetts2019a, Schott2019, Ghosh2018}. 
This trait has made them useful in protecting downstream tasks~\citep{Willetts2019a, Schott2019, Ghosh2018}. 

Nevertheless, they are still not completely impervious to attack~\citep{Tabacof2016,Gondim-ribeiro, Kos2016}: a hypothetical adversary can attack a VAE by applying small input perturbations to invoke meaningful changes in the encoding.
Typically this is done by trying to find perturbations which produce reconstructions close to a distinct target datapoint chosen by the adversary, rather than being representative of the original input.
Such attacks have been shown to be successful in a wide range of scenarios~\citep{Tabacof2016,Gondim-ribeiro, Kos2016,Willetts2019a}.
Recent work has made progress towards defending against them from an empirical and algorithmic perspective~\citep{Willetts2019a}, by repurposing approaches designed to learn disentangled latent representations~\citep{Burgess2018, Chen2018, Mathieu2019}.

However, a deeper understanding of the mechanisms underpinning the robustness of VAEs and their derivatives
is still lacking.
Furthermore, there are currently no theoretical foundations for this robustness or even any frameworks or formalizations for exactly what it means for a VAE to be ``robust.''
In other words, what would it mean to have a certifiably--robust VAE?
Moreover, are there scenarios where we might be able to provide theoretical guarantees of such robustness?

\begin{figure*}[t]
    \centering
     \includegraphics[width=0.8\textwidth]{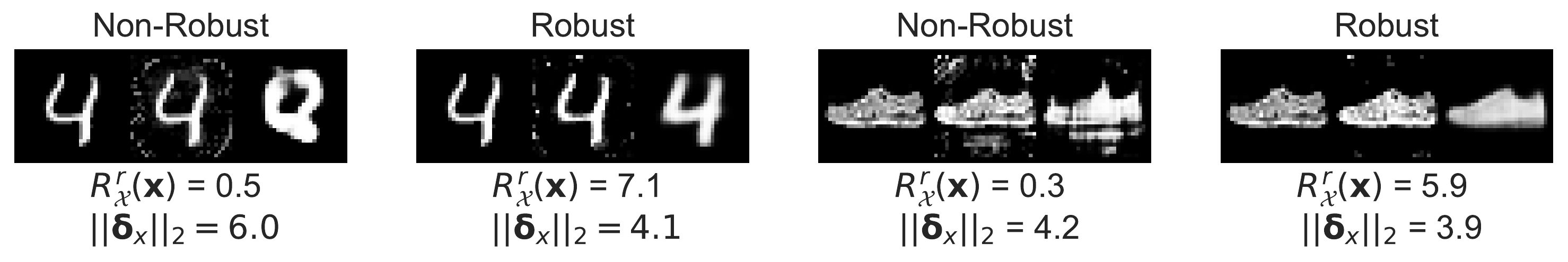}
\vspace{-8pt}
    \caption{Reconstructions under attack for robust and non-robust VAEs.
    Each subfigure shows from left to right: the original input, a perturbed input made by an adversarial attack, and the reconstruction of the perturbed input. 
    We show results for VAEs that are robust ($R^r_{\mathcal{X}}(\v{x})\ge\lVert\v \delta_x\rVert_2$) and non-robust ($R^r_{\mathcal{X}}(\v{x})<\lVert\v \delta_x\rVert_2$) for a given point $\v{x}$ and adversarially selected perturbation $\v \delta_x$.
    We see that the robust VAE reconstructions are visually closer to the original input.
    }
    \label{fig:robust_example}
    \vspace{-8pt}
\end{figure*}
As a first step to addressing these questions, we develop the first metric with which to evaluate the robustness of VAEs: $r$-robustness. 
Informally, for a given input, a VAE is $r$-robust to a given perturbation if it is more likely that its reconstruction will fall within a ball of radius $r$ around the undistorted maximum likelihood reconstruction, than outside it.
The smaller the value of $r$ for which we can confirm $r$-robustness, the more robust we can guarantee the VAE to be.
Through $r$-robustness, we provide theoretical foundations to understand the source of VAEs' robustness and provide insights into what can cause them to more or less robust.

Using this, we next develop a \emph{margin} of robustness, $R^r_{\mathcal{X}}(\v x)$, such that the VAE is $r$-robust to \emph{any} possible perturbations of the input $\v x$ within this margin.
This, in turn, allows us to provide a notion of a certifiably--robust reconstruction as it forms a guarantee that no attack limited to the margin can reliably undermine it.
An example of this is shown in Fig \ref{fig:robust_example}, where we demonstrate that large $R^r_\mathcal{X}(\v x)$ are associated with model--input pairs that are robust to adversarially generated input perturbations.
Analogously to the concept of an adversarial risk~\citep{uesato2018adversarial}, $R^r_{\mathcal{X}}(\v x)$ can further be converted to a metric for the \emph{overall} robustness of a VAE, by taking its expectation over the data generating distribution.

To make inroads towards imposing apriori constraints on a VAE that ensure that it is certifiably robust, we further derive a theoretical bound for $R^r_{\mathcal{X}}(\v x)$ as a function of the encoder variance and Jacobian.
This provides insights into the characteristics of VAEs that contribute to robustness.
Building on this result, we show empirically that VAEs with larger encoder variances and smaller Jacobians typically produce larger margins $R^r_{\mathcal{X}}(\v{x})$ and are thus more robust to perturbations.
We further demonstrate how these beneficial characteristics can be induced using methods introduced to learn disentangled representations, deriving new results for how these methods can be interpreted.

To summarize, our core contributions are that we first define a robustness metric, $r$-robustness, that is tailored to probabilistic generative models.
We develop a margin $R^r_{\mathcal{X}}(\v{x})$ on a VAE's input space within which it is $r$-robust to perturbations. 
Finally, we offer theoretical and empirical analysis---based on $R^r_{\mathcal{X}}(\v{x})$ and existing disentanglement methods---that can aid the construction of robust VAEs.

\section{Background}

\subsection{Variational Autoencoders}
VAEs~\citep{Rezende2014, Kingma2013}, and the models they have inspired~\citep{Alemi2016,Higgins2017,Chen2018,Willetts2019a}, are deep latent variable models.
Using $\v{x}\in\mathcal{X}$ to denote data and $\v{z}\in\mathcal{Z}$ to denote the latents with associated prior $p(\v{z})$, a VAE simultaneously learns both a forward generative model, $p_\theta(\v{x}|\v{z})$, and an amortised approximate posterior distribution, $q_\phi(\v{z}|\v{x})$, where $\theta$ and $\phi$ correspond to their respective parameters, typically taking the form of neural networks.
These are referred to as the decoder and encoder respectively, and a VAE can be thought of as a deep stochastic autoencoder.
Under this autoencoder framework, one typically takes the reconstructions as deterministic, corresponding to the mean of the decoder, namely $g_{\theta}(\v z) := \expect_{p_\theta(\v{x}|\v{z})}[\v x]$, a convention we adopt.

A VAE is trained by maximizing the 
evidence lower bound (ELBO)
$\mathcal{L}\!=\! \expect_{p_{\mathcal{D}}(\v{x})}[\mathcal{L}(\v{x})]$, where
$$\ELBO(\v{x})=\expect_{q_\phi(\v{z}|\v{x})} \left[\log p_\theta(\v{x}|\v{z})\right] - \KL(q_\phi(\v{z}|\v{x}) || p(\v{z}))$$
and $p_{\mathcal{D}(\v{x})}$ represents the empirical data distribution.
The optimization is carried out using stochastic gradient descent with Monte Carlo samples, typically employing the reparameterization trick~\citep{Kingma2013}. 
For example, for a Gaussian 
$q_\phi(\v{z}|\v{x})$, we draw samples as \(\v{z} = \v{\mu}_\phi(\v{x}) + \v{\eta} \circ \v{\sigma}_\phi(\v{x}), \v{\eta} \sim \mathcal{N}(\v{0}, \v{I})\), where $\circ$ is the element-wise product.

\subsection{Adversarial Attacks for VAEs}

In adversarial settings, an agent is trying to alter the behavior of a model towards a specific goal. 
This could involve, in the case of classification, adding a very small perturbation to an input so as to alter the model's predicted class. 
For many deep learning models, small changes to data imperceptible to the human eye, can drastically change a model's output.

Proposed attacks for VAEs aim to produce reconstructions close to a chosen target image by applying small distortions to the input of a VAE~\citep{Tabacof2016, Gondim2018, Kos2018}.
The adversary optimizes this perturbation to minimize some measure of distance between the reconstruction and the target image \textit{or} the distance between the embedding of the distorted image and the embedding of the target image.

\subsection{Disentangled VAEs}

Learning \emph{disentangled} representations~\citep{bengio2013representation} involves training a probabilistic generative model in a manner that encourages a one-to-one correspondence between dimensions of the learnt latent space and some interpretable aspect of the data~\citep{Higgins2017,Alemi2016,Burgess2018, Chen2018, Mathieu2019}.
For instance, there might be some axis in latent space encoding `hair color' for images of people.
One such method is the $\beta$-VAE \citep{Burgess2018, Higgins2017}, which upweights the $\KL$ in the ELBO with a penalization factor $\beta$:
\[
\ELBO_{\beta}(\v{x})=\expect_{q_\phi(\v{z}|\v{x})}[\log p_\theta(\v{x}|\v{z})] - \beta \KL(q_\phi(\v{z}|\v{x})||p(\v{z}))]
\]
Disentangling can be difficult to achieve in practice, requiring careful hyperparameter tuning
\citep{Locatello2019, Mathieu2019, Rolinek2019}. 

Nevertheless, models trained under disentangling objectives have other beneficial properties.
In particular, $\beta$-TC regularization \citep{Chen2018}, another method proposed for disentangling, has been shown to induce models that are more robust to adversarial attacks \citep{Willetts2019a}.
The encoders of $\beta$-VAEs have also been used as the perceptual part of Deep RL models to create more robust agents \citep{Higgins2017a}.
Thus, regardless of the presence of disentangled generative factors, these regularization methods can induce models that are more robust to attack.

\section{Robustness of VAEs}
\label{sec:stoch_layers}

\subsection{A Probabilistic Metric of Robustness}

Deep learning models can be brittle.
Some of the most sophisticated deep learning classifiers can be broken by simply adding small pertubations to their inputs \citep{Szegedy2014, Shamir2019, Goodfellow2015, Papernot2016, Moosavi-Dezfooli2016}.
Here, perturbations that would not fool a human break neural network predictions.
A model's weakness to such perturbations is called its \textit{sensitivity}.
For classifiers, we can straightforwardly define an associated \emph{sensitivity margin}: it is the radius of the largest metric ball centered on an input $\v{x}$ for which a classifier's original prediction holds for all possible perturbations within that ball.

Defining such a margin for VAEs is conceptually more difficult as, in general, the reconstructions are continuous rather than discrete.
To put it another way, there is no step-change in VAE reconstructions that is akin to a change of a predicted class in classifiers;
\textit{any} perturbation in the input space will result in a change in the VAE output.
To complicate matters further, a VAE's latent space is stochastic: the same input can result in different reconstructions.

As a first step to deriving robustness margins for VAEs, we now introduce a criterion for measuring robustness in probabilistic models: $r$-robustness. 
We start by presenting it in the general setting, before linking it to the specific case of VAEs.
\begin{definition}
\label{def:r_robust}
A model, $f$, operating on a point $\v{x}$, that outputs a continuous random variable is $r$-robust for $r \in \mathbb{R}^+$, to a perturbation $\v{\delta}$ and for an arbitrary norm $||\cdot||$ iff $$p(||f(\v{x} + \v{\delta}) - f(\v{x})|| \leq r) > p(||f(\v{x} + \v{\delta}) - f(\v{x})|| > r).$$ 
\end{definition}
We will assume from now on that the norm is taken to be the 2-norm $||\cdot||_2$, such that $r$-robustness determines a bound for which changes in the output $f(\v{x})$ induced by the perturbation $\v{\delta}$ are more likely to fall within the hyper-sphere of radius $r$, than not.
As $r$ decreases, the criterion for model robustness becomes stricter.
We note that $r$-robustness can be viewed as a probabilistic analog to the criterion for regression models presented by \cite{Nguyen2018}.
We also note that r-robustness can be generalized to $p(||f(\v{x} + \v{\delta}) - f(\v{x})||_2 \leq r) > m$, where $1-m$ is the allowable risk (with $m=0.5$ in Definition~\ref{def:r_robust}). 

Because this criterion is applicable to probabilistic models with continuous output spaces, it is directly relevant for ascertaining robustness in VAEs. 
By considering the smallest $r$ for which the criterion holds, we can think of it as a metric that provides a probabilistic measure of the \textit{extent} to which outputs are altered given a corrupted input:
the smaller the value of $r$ for which we can confirm $r$-robustness, the more robust the model.

\subsection{A Robustness Margin for VAEs}

We want to define a margin in a VAE's input space for which it is robust to perturbations of a given input. 
Perturbations that fall within this margin should not break our criterion for robustness.
Formally, we want a margin in $\mathcal{X}$, $R^r_{\mathcal{X}}(\v{x})$, for which any distorted input $\v x+\v{\delta}_x$, where $||\v{\delta}_x||_2 < R^r_{\mathcal{X}}(\v{x})$ is the perturbation, satisfies $r$-robustness when reconstructed. 

However, to consider the robustness of VAEs, we must not only take into account the perturbation $\v{\delta}_x$, but also the stochasticity of encoder.
We can think of the decoder as taking in noisy inputs because of this stochasticity.
Naturally, this noise can itself potentially cause issues in the robustness of VAE: if the level of noise is too high, we will not achieve reliable reconstructions even without perturbing the original inputs.
As such, before even considering perturbations, we first need to adapt our $r$-robustness framework to deal with this stochasticity.

\begin{figure}[t!]
    \centering
    \includegraphics[width=0.25\textwidth]{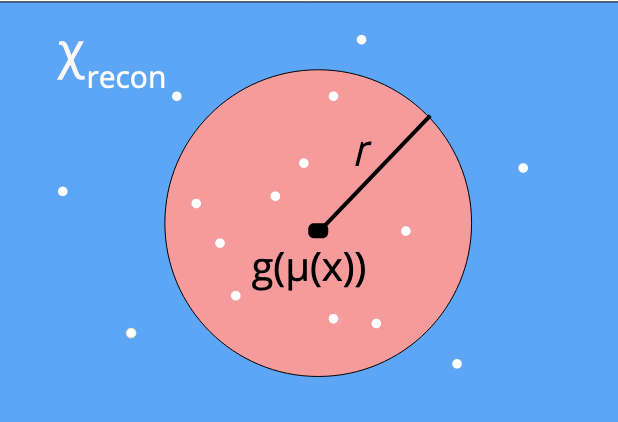}
    \vspace{-4pt}
    \caption{Illustration of $r$-robustness in a VAE.  
    White dots represent possible reconstructions, with the diversity originating from the encoder stochasticity.
    For $r$-robustness to hold, the probability of our reconstruction falling within the red area---a hypersphere of radius $r$ centered on $g_{\theta}(\v{\mu}_\phi(\v{x}))$---needs to be greater than or equal to the probability of falling outside.
    }
    \label{fig:r_robust}
\vspace{-6pt}
\end{figure}

\subsubsection{$r$-robustness for VAEs}

Given an input $\v{x}$, r-robustness dictates that we want to define some region in the reconstruction space, $\mathcal{X}_{\mathrm{recon}}$, within which most of the decoded samples from the latent embedding $\v{z}$ will fall.
We will assume that the encoder is a Gaussian as this is standard practice.
Denoting $g_{\theta}(\v z)$ as the deterministic mapping induced by the VAE's decoder network and $\v{\mu}_\phi(\v{x})$ as the mean embedding of the encoder, we can define 
$g_{\theta}(\v{\mu}_\phi(\v{x}))$ to be the ``maximum likelihood'' reconstruction, noting this is a deterministic function.
Our aim is now to find a hyper-sphere of radius $r$ centered on $g_{\theta}(\v{\mu}_\phi(\v{x}))$ within which most of the possible VAE outputs for a given point $\v{x}$ lie. 
Larger $r$ are indicative of a greater variance in the encoding process, and as such are likely to be associated with poorer quality reconstructions.

Denoting $\v{\eta} \sim \mathcal{N}(\v{0}, \v{I})$ as the reparameterized stochasticity of the encoder, we define the distance from the maximum likelihood reconstruction, induced by this sampling as 
\begin{align}
\Delta(\v{x}) = g_{\theta}(\v{\mu}_\phi(\v{x}) + \v{\eta}\circ \v{\sigma}_\phi(\v{x})) - g_{\theta}(\v{\mu}_\phi(\v{x})) 
\end{align}
Using this, we see that a VAE is $r$-robust to the stochasticity of the encoder iff (see also Fig~\ref{fig:r_robust})
\begin{align}
p(||\Delta(\v{x})||_2 \leq r) &> p(||\Delta(\v{x})||_2 > r).
\label{eq:robust_satisfy}
\end{align}
Informally, we want it to be more probable for reconstructions to fall within this radius $r$ than not.

\subsubsection{Robustness to distortions in data-space}

Given that we have established conditions for $r$ in Eq~\eqref{eq:robust_satisfy} that take into account latent space sampling, we can now return to our original objective, which was to determine a margin in the data-space $\mathcal{X}$ for which a VAE is robust to perturbations on its input.
Recall that this implicitly means that we want to define a bound for robustness given two sources of perturbations: the stochasticity of the encoder, and a hypothetical input perturbation $\v{\delta}_x$.

For simplicity of analysis, we consider the case where the perturbation is applied only to the encoder mean input and not the encoder variance input, noting that the latter is typically stable across inputs and so is less of a concern.
In Fig \ref{fig:musigattacks} we demonstrate that adversarial attacks on VAE encoders are dominated by the perturbation to the embedding mean that is induced, thereby justifying this assumption.
We note also that one can usually also simply fix the encoder variance to a constant for all datapoints without incurring substantial performance drops~\citep{Ghosh2020}, thereby providing a means to ensure this assumption holds exactly if needed.

\begin{figure}[t!]
    \centering
    \includegraphics[width=0.25\textwidth]{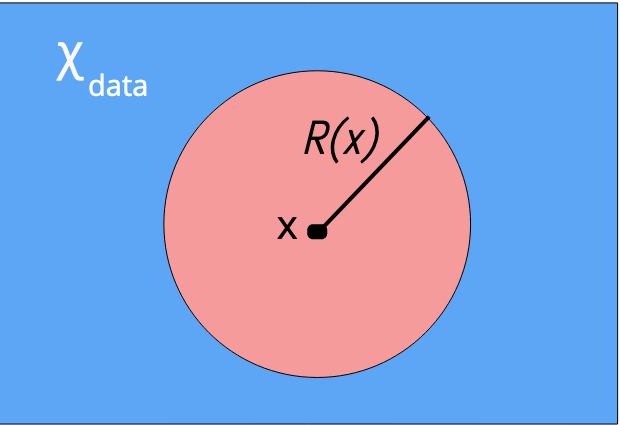}
    \vspace{-4pt}
    \caption{
    Illustration of the margin $R^r_{\mathcal{X}}(\v{x})$, which is defined in the \textbf{input} space $\mathcal{X}$. 
    Red represents represents the subspace where the model is $r$-robust, such that $p(||\Delta(\v{x}, \v{\delta}_x)||_2 \leq r) > p(||\Delta(\v{x},  \v{\delta}_x)||_2 > r)$ holds for all $\v x + \v{\delta}_x$ falling in this region, that is all $\v{\delta}_x : \lVert \v{\delta}_x \rVert_2 \le R^r_{\mathcal{X}}(\v{x})$.
    }
    \label{fig:boundary_in_X}
\vspace{-6pt}
\end{figure}

\begin{figure*}[t!]
    \centering
     \subfloat[][MNIST]{\includegraphics[width=0.24\textwidth]{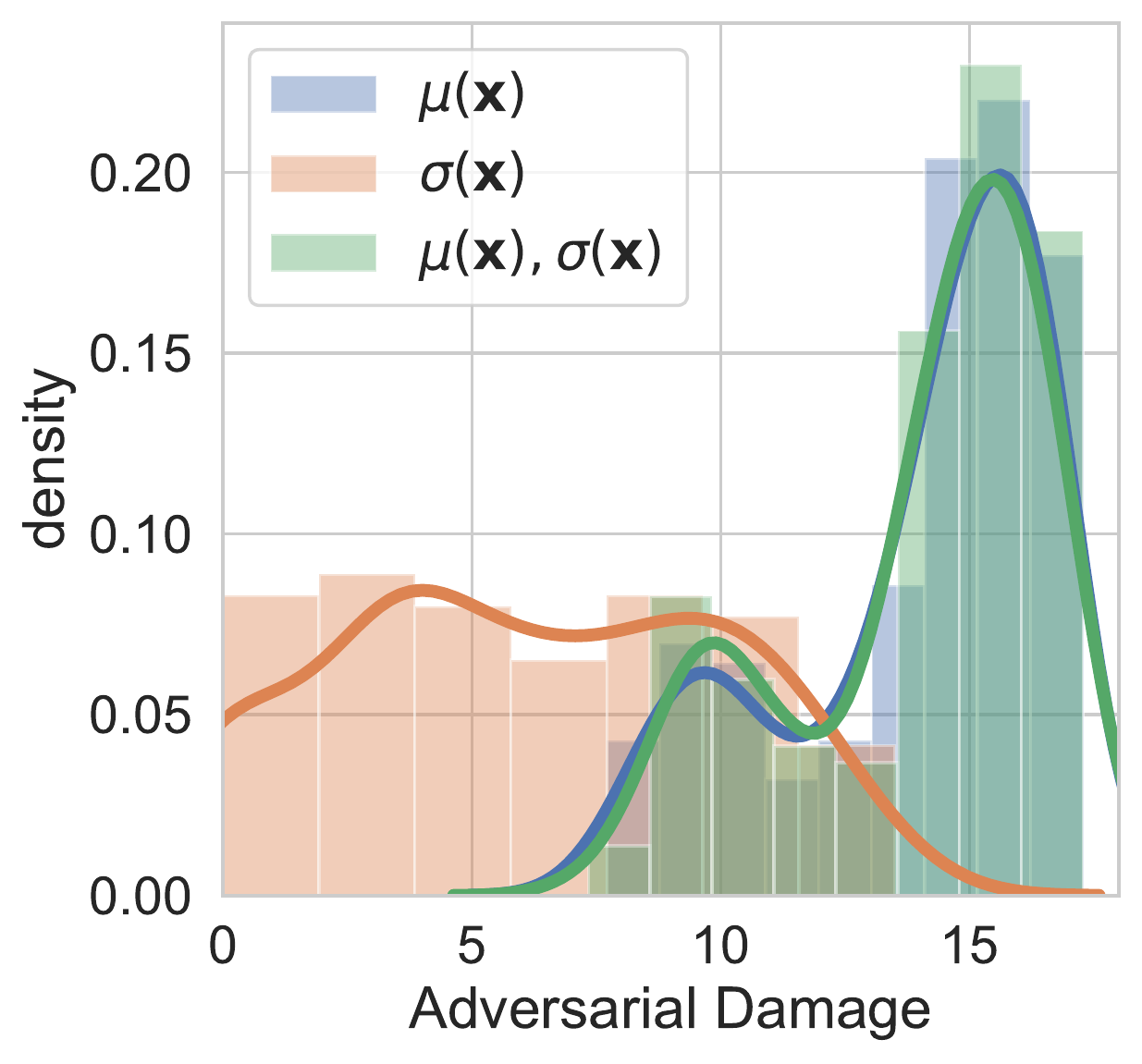}}
     \hspace{1.25em}
    \subfloat[][fMNIST]{\includegraphics[width=0.245\textwidth]{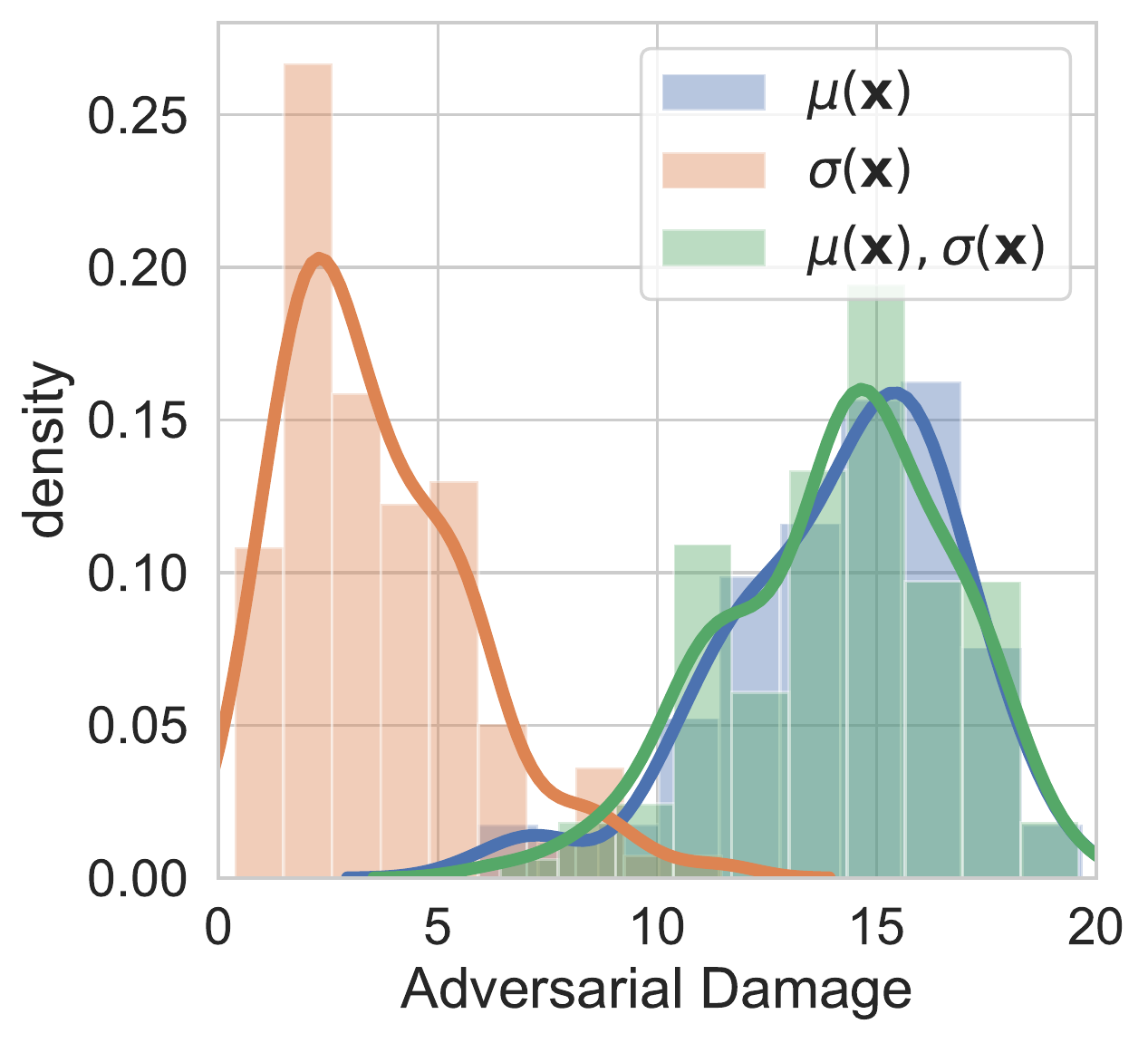}}
     \hspace{1.25em}
      \subfloat[][CIFAR10]{\includegraphics[width=0.235\textwidth]{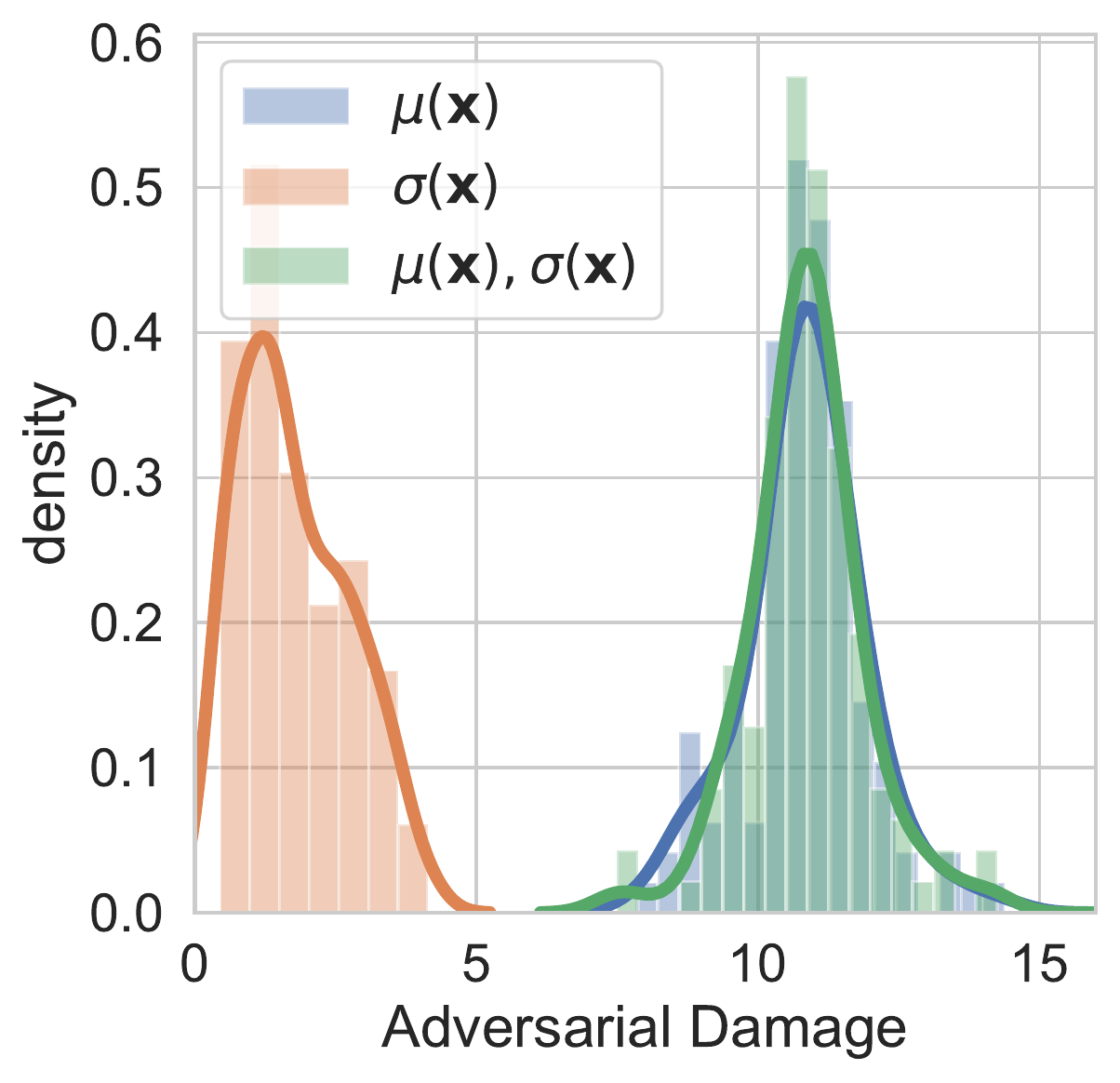}}
      \vspace{-6pt}
    \caption{
    \textit{Maximum damage} adversarial attacks (see Eq \eqref{eq:adv_loss}) on multiple VAEs trained on MNIST (a), fashion-MNIST (b), and CIFAR10 (c). 
    We attack 25 datapoints for each VAE and propagate the attacks to the encoder mean ($\mu(\v{x})$), the encoder standard deviation ($\sigma(\v{x})$), or both ($\mu(\v{x}), \sigma(\v{x})$).
    Attack norms are capped to 10.
    Shown are distribution plots of the adversarial damage, i.e.~the $L_2$ distance between the reconstruction resulting from the attack and the maximum likelihood reconstruction $g_{\theta}(\v{\mu}_\phi(\v{x}))$. 
    Clearly attacks on $\mu(\v{x})$ are more harmful than on $\sigma(\v{x})$, and most of the damage from attacks on both $\mu(\v{x})$ and $ \sigma(\v{x})$ stems from the attack on $\mu(\v{x})$. 
    }
    \label{fig:musigattacks}
    \vspace{-6pt}
\end{figure*}

We define the distance from the maximum likelihood reconstruction, $g_{\theta}(\v{\mu}_\phi(\v{x}))$, induced by the stochasticity of the encoder \textit{and} an input perturbation $\v{\delta}_x$ as
\begin{align}
\Delta(\v{x}, \v{\delta}_x) = g_{\theta}(\v{\mu}_\phi(\v{x} + \v{\delta}_x) + \v{\eta}\sigma_\phi(\v{x})) - g_{\theta}(\v{\mu}_\phi(\v{x})).  \nonumber
\end{align}
We can now define the condition for which $r$-robustness is satisfied on the VAE output given the two sources of perturbation as
\begin{align}
    &||\v{\delta}_x||_2 < R^r_\mathcal{X}(\v{x}) ~ \Leftrightarrow ~ p(||\Delta(\v{x}, \v{\delta}_x)||_2 \leq r) > 0.5 
\end{align}
Thus, $R^r_\mathcal{X}(\v{x})$ is the margin of robustness of the VAE such that $\forall \v{\delta}_x \,:\, ||\v{\delta}_x||_2<R^r_\mathcal{X}(\v{x})$, $\v{x} + \v{\delta}_x$ is more likely than not to be reconstructed within a radius $r$ of the maximum likelihood reconstruction $g_\theta(\v{\mu}_{\phi}(\v{x}))$. 
A high level illustration of this is given in Fig \ref{fig:boundary_in_X}, and Fig \ref{fig:R_demo} shows a simple empirical demonstration of how $R^r_\mathcal{X}(\v{x})$ relates to the probability of producing a good reconstruction under random input perturbations.

We note that, analogously to the concept of an adversarial risk~\citep{uesato2018adversarial}, $R^r_{\mathcal{X}}(\v x)$ can further be converted to a metric for the \emph{overall} robustness of a VAE, by taking its expectation over the data generating distribution, namely
    $R^r_{\mathcal{X}} = \expect_{p_{\mathcal{D}}(\v x)} \left[R^r_\mathcal{X}(\v{x})\right]$.

\subsection{Characterizing the Margin}
Given this definition, we now wish to try and characterize $R^r_\mathcal{X}(\v{x})$.
In particular, we would like to understand what characteristics of the VAE are likely to make it relatively larger or smaller.
Ideally, we also want to establish scenarios where we might be able to provide guarantees of a minimum size for $R^r_\mathcal{X}(\v{x})$, such that we might be able to make inroads into how one might apriori construct a certifiably-robust VAE.

A perturbation in $\mathcal{X}$, $\v{\delta}_x$, induces a perturbation in $\mathcal{Z}$, $\v{\delta}_z$. To determine the margins for robustness in $\mathcal{X}$, we first apply the Neyman-Pearson lemma \citep{Neyman1992, Cohen2019}, assuming a ``worst-case'' decoder. This decoder has subspaces in  $\mathcal{Z}$, where it is is either robust or non-robust, that are divided by a boundary that is normal to both the induced perturbation $\v{\delta}_z$ and to the dimension of minimal variance in $\mathcal{Z}$, $\min_i \v{\sigma}_\phi(\v{x})_i$.
We then determine the minimum perturbation norm in $\mathcal{X}$ which induces a perturbation in $\mathcal{Z}$ that crosses this boundary.
\begin{restatable}{theorem}{Rbound}
Consider a VAE with a diagonal-variance Gaussian encoder, an input $\v x$, and an output margin $r\in \mathbb{R}$ such that the VAE is $r$-robust to the stochasticity of the encoder when the $\v x$ is unperturbed as per~\eqref{eq:robust_satisfy}.
Assuming standard regularity assumptions (discussed in the proof) hold for $\v{\mu}_{\phi}(\v x)$, then 
\begin{align}
R^r_\mathcal{X}(\v{x}) \geq \frac{(\min_i \v{\sigma}_\phi(\v{x})_i)\Phi^{-1}(p(||\Delta(\v{x})||_2 \leq r))}{||\v{J}^{\mu}_{\phi}(\v{x})||_F } + \mathcal{O} (\v \varepsilon)
\label{eq:margin_robust_vae_data_lower_bound}
\end{align}
where $\mathcal{O} (\v \varepsilon)$ represents higher order dominated terms that disappear in the limit of small perturbations,
$\Phi^{-1}$ is the probit function, %
$\v{J}^{\mu}_{\phi}(\v{x})_{i,j}=
\partial \v{\mu}_\phi(\v{x})_{i} / \partial \v{x}_j$ is the Jacobian of $\v{\mu}_{\phi}(\v x)$, and $||\cdot||_F$ is the Frobenius norm.
\label{prop:margin_x}
\end{restatable}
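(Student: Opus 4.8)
The plan is to reduce data-space robustness to a two-Gaussian hypothesis-testing problem in the latent space, where the Neyman--Pearson lemma supplies the worst-case decoder, and then pull the resulting latent condition back to a norm bound on $\v{\delta}_x$.

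First I would linearize the encoder mean. Under the stated regularity (differentiability of $\v{\mu}_\phi$ at $\v{x}$), a Taylor expansion gives $\v{\mu}_\phi(\v{x}+\v{\delta}_x) = \v{\mu}_\phi(\v{x}) + \v{J}^{\mu}_\phi(\v{x})\v{\delta}_x + \mathcal{O}(\|\v{\delta}_x\|_2^2)$, so the input perturbation induces a latent-mean shift $\v{\delta}_z = \v{J}^{\mu}_\phi(\v{x})\v{\delta}_x + \mathcal{O}(\v \varepsilon)$. Since the variance input is held fixed, the reparameterized latent is $\v{z}\sim\mathcal{N}(\v{\mu}_\phi(\v{x})+\v{\delta}_z,\v{\Sigma})$ with $\v{\Sigma}=\mathrm{diag}(\v{\sigma}_\phi(\v{x})^2)$, while the unperturbed case is the same covariance centred at $\v{\mu}_\phi(\v{x})$. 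Next I would phrase $r$-robustness as a statement about a fixed ``robust region'' $\mathcal{R}=\{\v{z}: \|g_\theta(\v{z})-g_\theta(\v{\mu}_\phi(\v{x}))\|_2\le r\}$ in $\mathcal{Z}$: the unperturbed hypothesis gives $\Pr_{\mathcal{N}(\v{\mu}_\phi(\v{x}),\v{\Sigma})}(\v{z}\in\mathcal{R})=p_0:=p(\|\Delta(\v{x})\|_2\le r)$, and perturbed $r$-robustness is $\Pr_{\mathcal{N}(\v{\mu}_\phi(\v{x})+\v{\delta}_z,\v{\Sigma})}(\v{z}\in\mathcal{R})>0.5$.

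Because the true decoder is unknown, for a certified lower bound I would take the worst-case decoder: among all regions with unperturbed mass $p_0$, the Neyman--Pearson lemma identifies the one minimizing the perturbed mass as the half-space whose boundary is normal (in whitened coordinates $\v{\Sigma}^{-1/2}(\v{z}-\v{\mu}_\phi(\v{x}))$) to the shift $\v{\Sigma}^{-1/2}\v{\delta}_z$. Projecting both Gaussians onto this normal collapses the problem to one dimension and yields the worst-case perturbed mass $\Phi(\Phi^{-1}(p_0)-\|\v{\Sigma}^{-1/2}\v{\delta}_z\|_2)$. Requiring this to exceed $0.5=\Phi(0)$ gives the clean worst-case criterion $\|\v{\Sigma}^{-1/2}\v{\delta}_z\|_2<\Phi^{-1}(p_0)$. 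Finally I would translate this back to a data-space norm: using $\v{\delta}_z=\v{J}^{\mu}_\phi(\v{x})\v{\delta}_x$ and submultiplicativity, $\|\v{\Sigma}^{-1/2}\v{\delta}_z\|_2 \le \|\v{\Sigma}^{-1/2}\|_{\mathrm{op}}\,\|\v{J}^{\mu}_\phi(\v{x})\|_{\mathrm{op}}\,\|\v{\delta}_x\|_2 \le \frac{\|\v{J}^{\mu}_\phi(\v{x})\|_F}{\min_i\v{\sigma}_\phi(\v{x})_i}\|\v{\delta}_x\|_2$, where $\|\v{\Sigma}^{-1/2}\|_{\mathrm{op}}=1/\min_i\v{\sigma}_\phi(\v{x})_i$ (the worst-case latent direction is the minimal-variance axis) and the operator norm is upper bounded by the Frobenius norm. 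Substituting into the criterion and solving for $\|\v{\delta}_x\|_2$ gives robustness whenever $\|\v{\delta}_x\|_2 < (\min_i\v{\sigma}_\phi(\v{x})_i)\Phi^{-1}(p_0)/\|\v{J}^{\mu}_\phi(\v{x})\|_F$, which is exactly the claimed bound once the discarded Taylor remainders are collected into $\mathcal{O}(\v \varepsilon)$.

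The main obstacle I anticipate is the worst-case decoder step: justifying rigorously that Neyman--Pearson applies to an arbitrary measurable region $\mathcal{R}$, and that jointly taking the worst case over the region shape and over the perturbation direction produces a valid certified lower bound (the actual decoder's region is fixed, so this is conservative) rather than merely a heuristic, while simultaneously controlling the first-order Taylor error so that the $\mathcal{O}(\v \varepsilon)$ terms genuinely vanish as $\|\v{\delta}_x\|_2\to0$. By comparison, the norm inequalities and the one-dimensional projection in the last two steps are routine.
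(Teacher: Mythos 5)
Your proposal is correct and follows essentially the same route as the paper's proof: a first-order Taylor expansion of $\v{\mu}_\phi$ giving $\v{\delta}_z = \v{J}^{\mu}_{\phi}(\v{x})\v{\delta}_x + \mathcal{O}(\v\varepsilon)$, a Neyman--Pearson worst-case (half-space) decoder argument in latent space yielding the probit condition, and the Frobenius-norm inequality to pull the latent margin back to $\mathcal{X}$. If anything, your whitened-coordinate formulation with the explicit worst-case perturbed mass $\Phi\left(\Phi^{-1}(p_0)-\lVert\v{\Sigma}^{-1/2}\v{\delta}_z\rVert_2\right)$ is a more rigorous rendering of the paper's informal ``boundary perpendicular to the minimal-variance direction'' step, which the paper justifies only by analogy to \cite{Cohen2019}.
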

The proof is provided in Appendix \ref{app:margin_X}.
This bound is based on a first order approximation of $\v{\mu}_{\phi}(\v x+\v \delta_x)$ around the original input $\v x$; the impact of $\mathcal{O} (\v \varepsilon)$ thus depends on how well this approximation holds.
As such, the result is particularly applicable to networks with piecewise linear activation functions such as the $\mathrm{ReLU}$, which are locally linear and are among the most widely used activation functions.
For these activation functions this bound is locally exact: $\mathcal{O} (\v \varepsilon)$ is exactly zero if the size of the bound is smaller than what is required to go outside the locally linear region.

\begin{figure}[t!]
    \centering
     \includegraphics[width=0.25\textwidth]{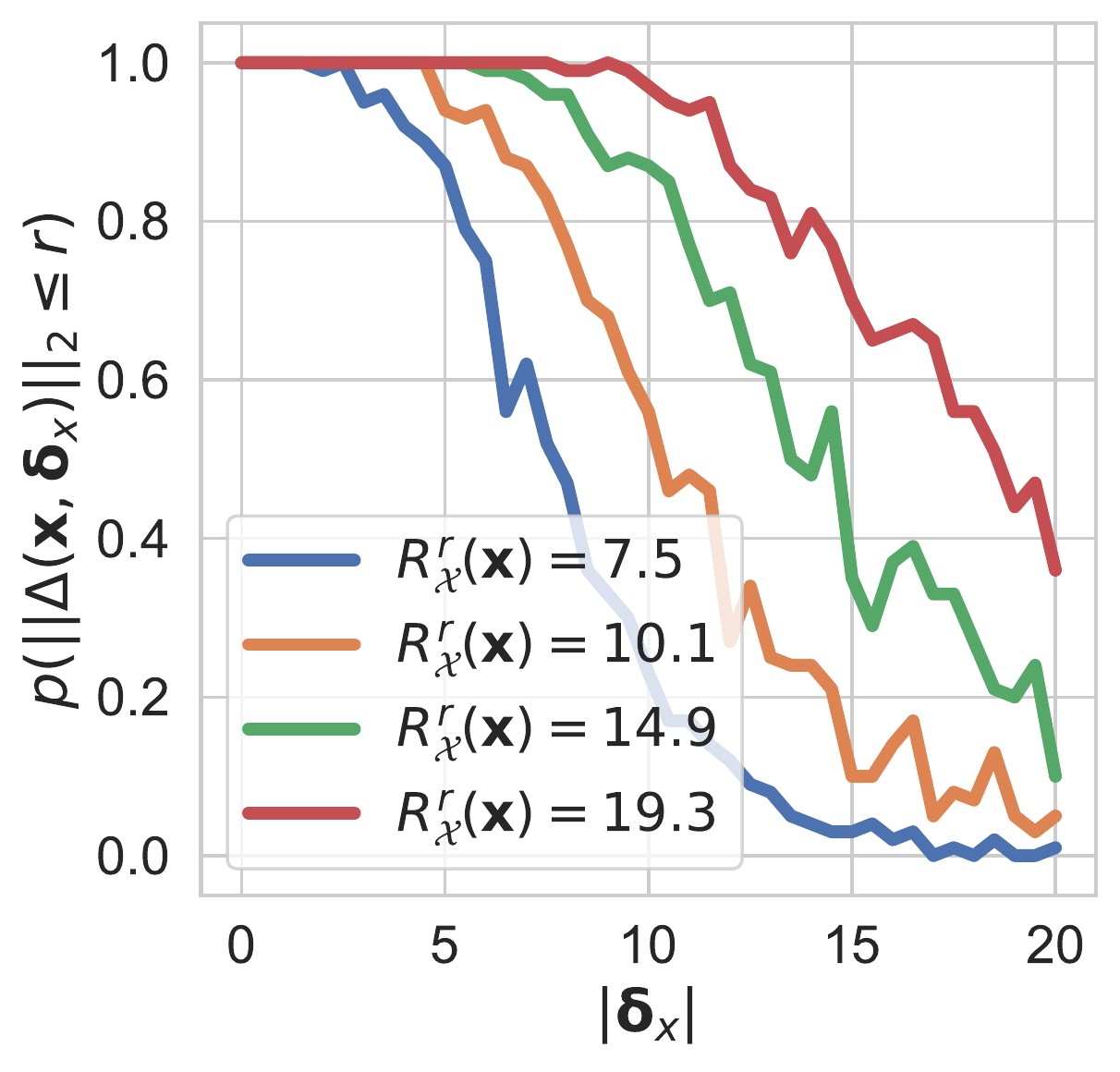}
    \vspace{-4pt}
    \caption{$R^r_\mathcal{X}(\v{x})$ for four VAEs of varying robustness trained on MNIST. 
    We fix the input $\v x$ and perturbation direction $\v{\delta}_x/\lVert\v{\delta}_x\rVert_2$, but vary the perturbation size $\lVert\v{\delta}_x\rVert_2$. We assess the proportion of samples which fall within $r\!=\!4$ of the maximum likelihood reconstruction.}
    \label{fig:R_demo}
    \vspace{-10pt}
\end{figure}

\begin{figure*}[t!]
         \centering
     \subfloat[][MNIST]{\includegraphics[width=0.257\textwidth]{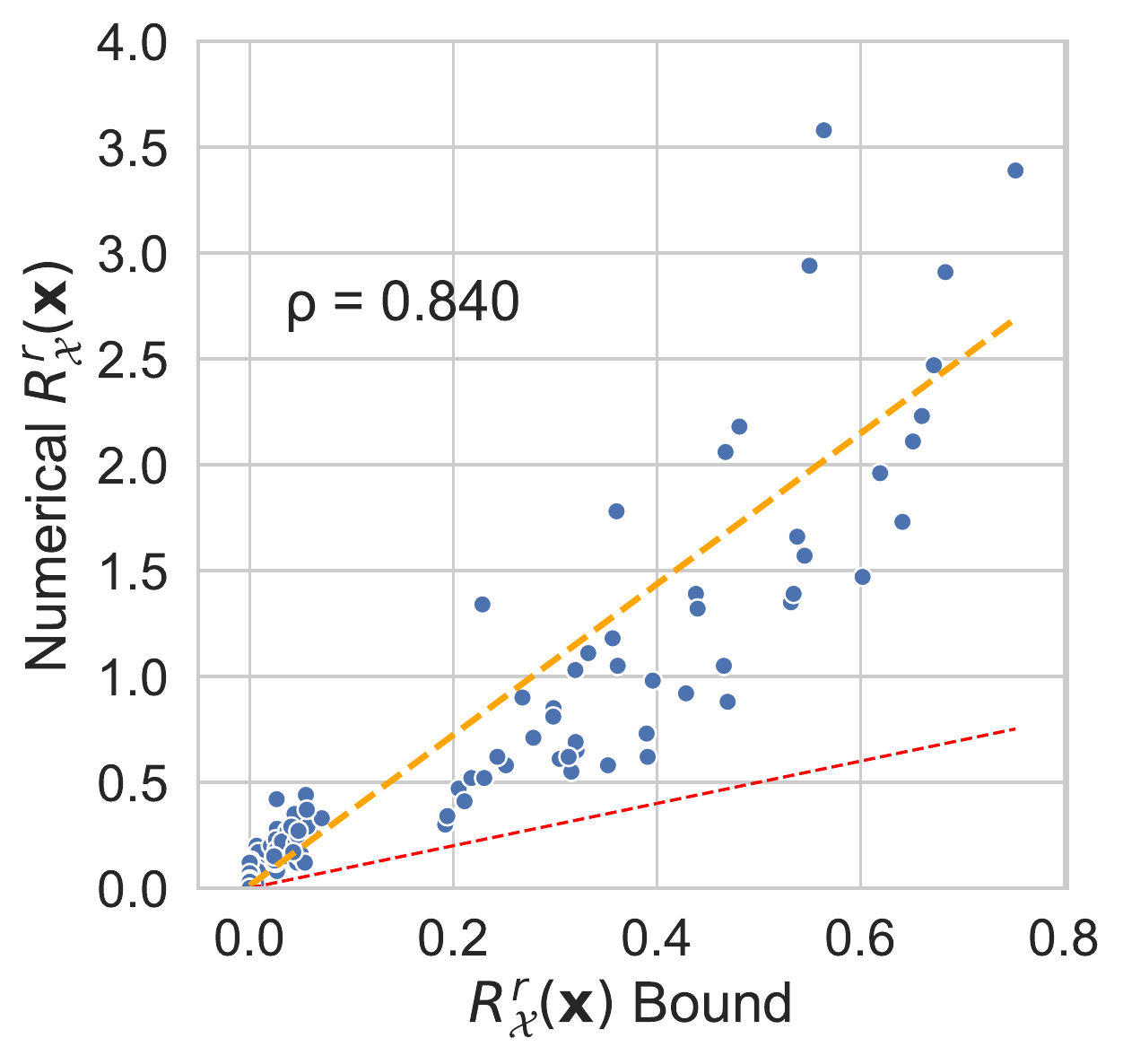}}
      \hspace{1em}
      \subfloat[][fMNIST]{\includegraphics[width=0.25\textwidth]{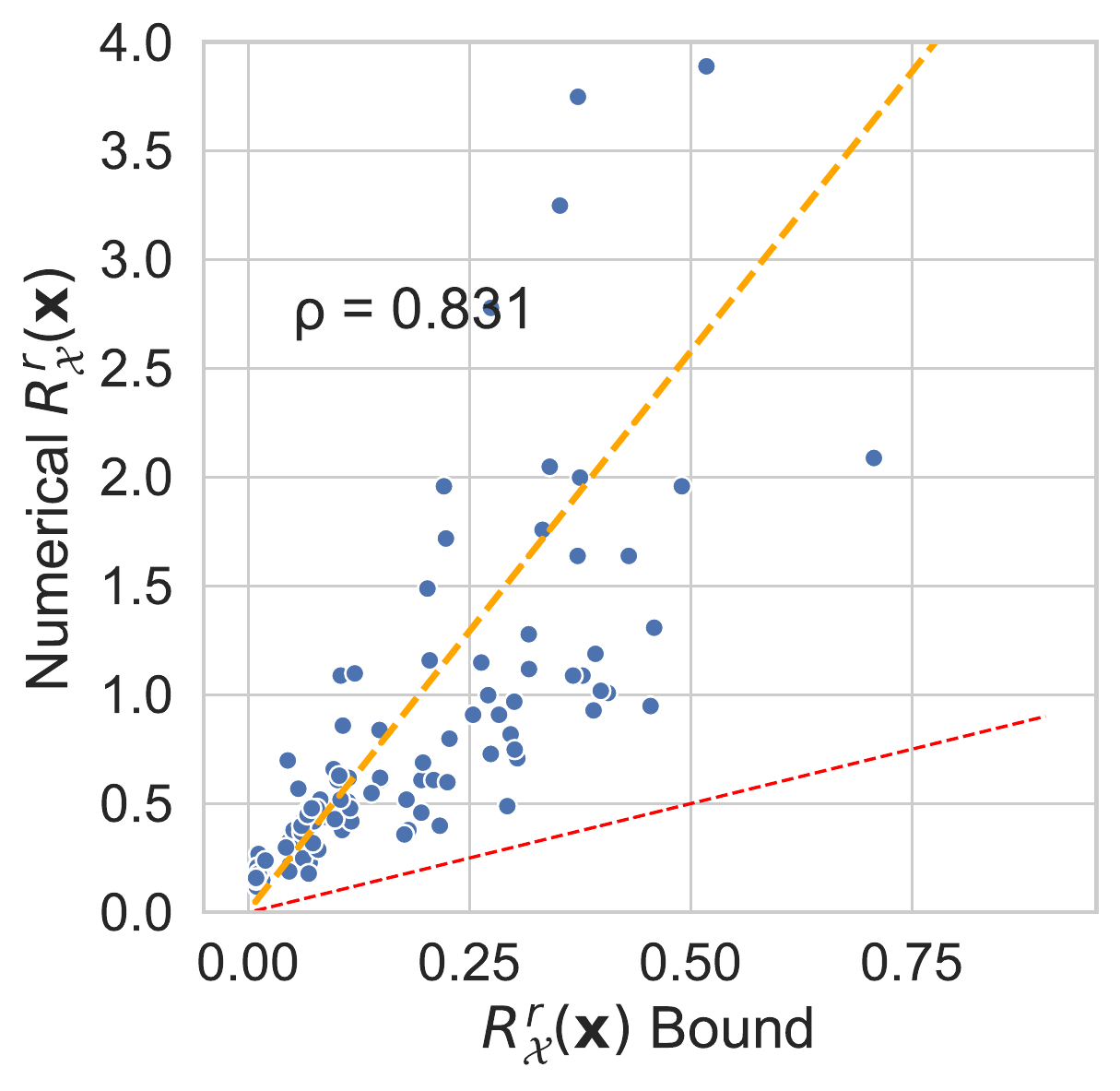}}
       \hspace{1em}
      \subfloat[][CIFAR10]{\includegraphics[width=0.25\textwidth]{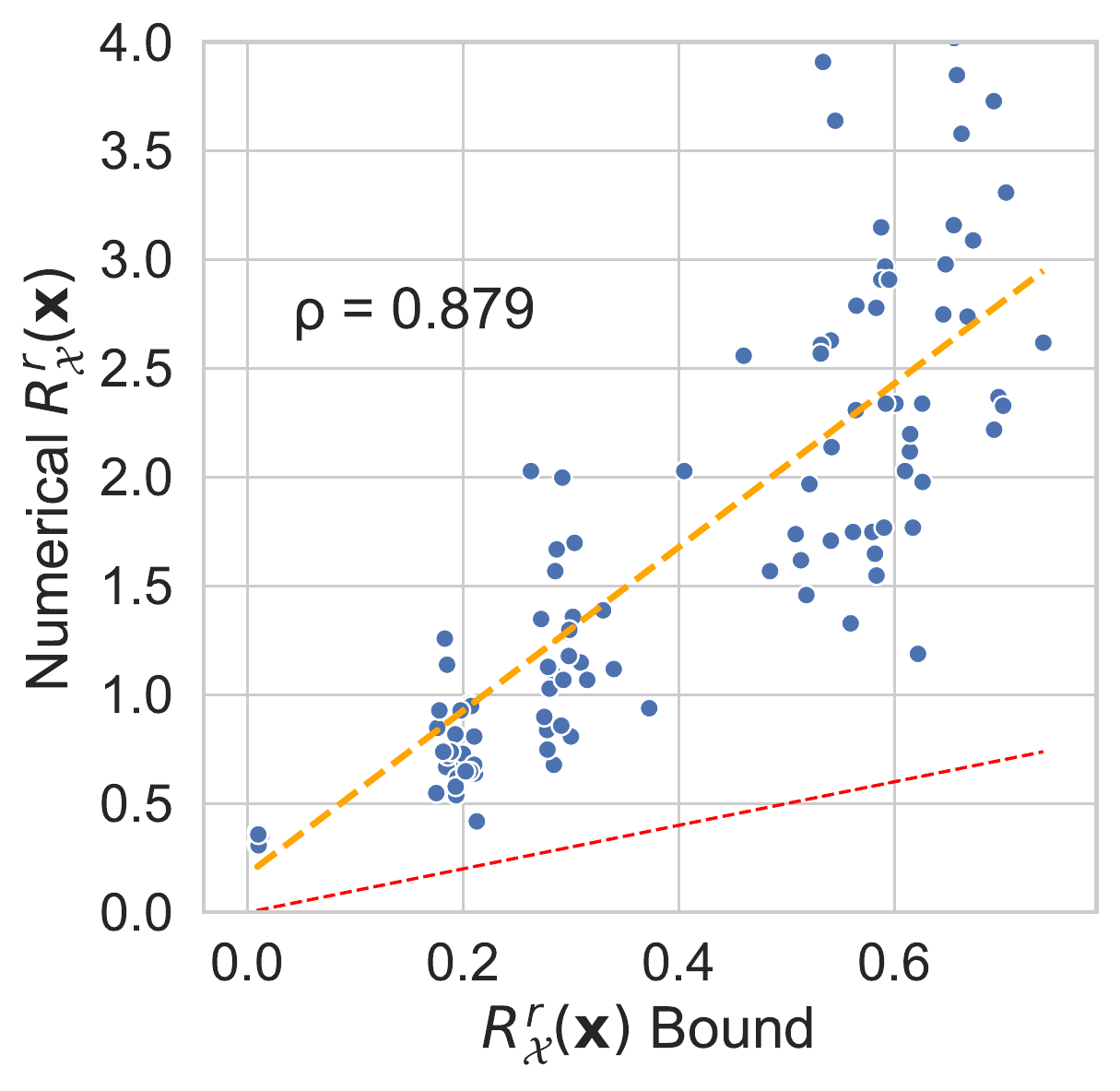}}\\
      \vspace{-4pt}
    \subfloat[][MNIST]{\includegraphics[width=0.25\textwidth]{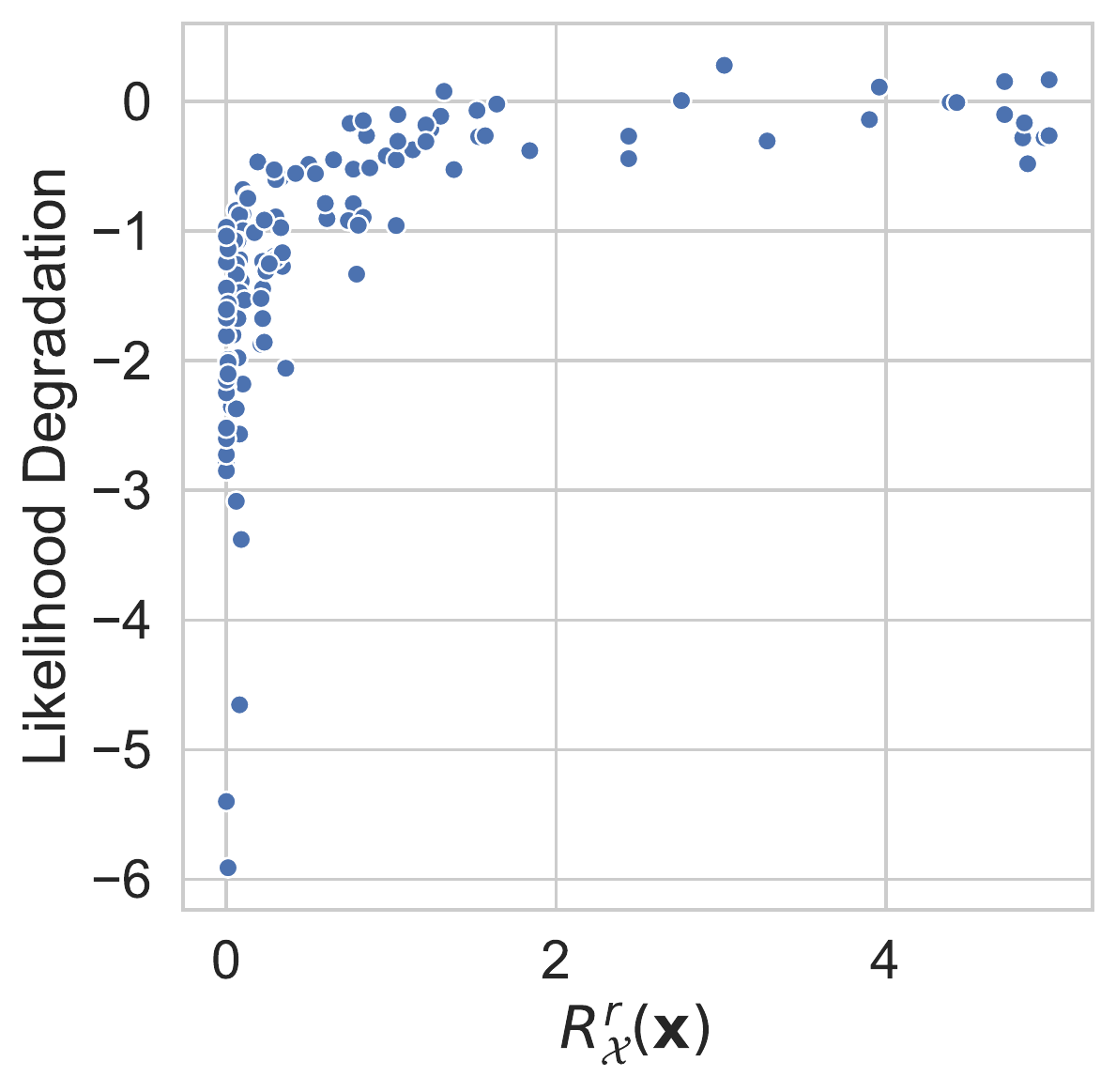}}
    \hspace{1em}
    \subfloat[][fMNIST]{\includegraphics[width=0.25\textwidth]{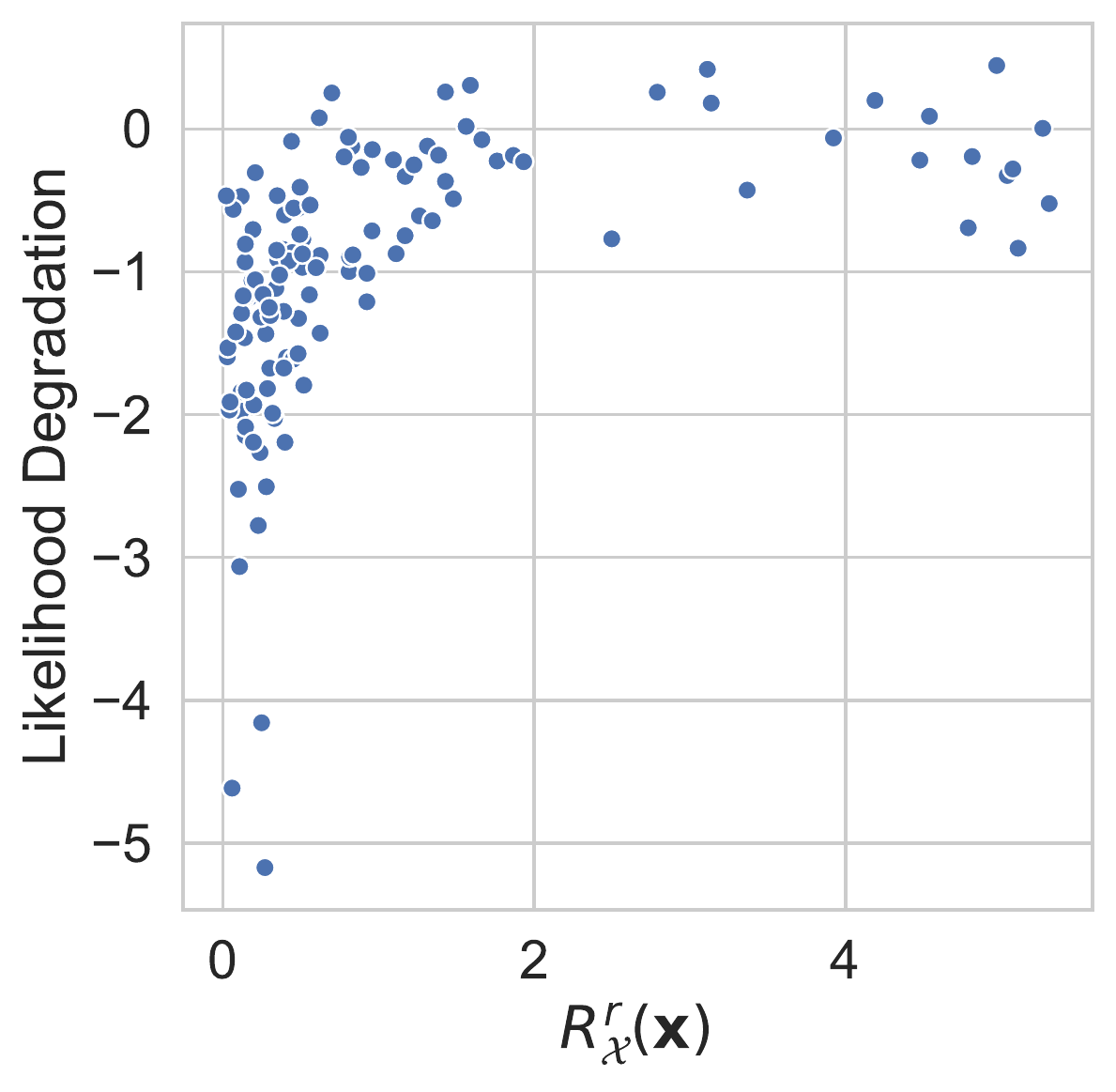}}
     \hspace{1em}
     \subfloat[][CIFAR10]{\includegraphics[width=0.26\textwidth]{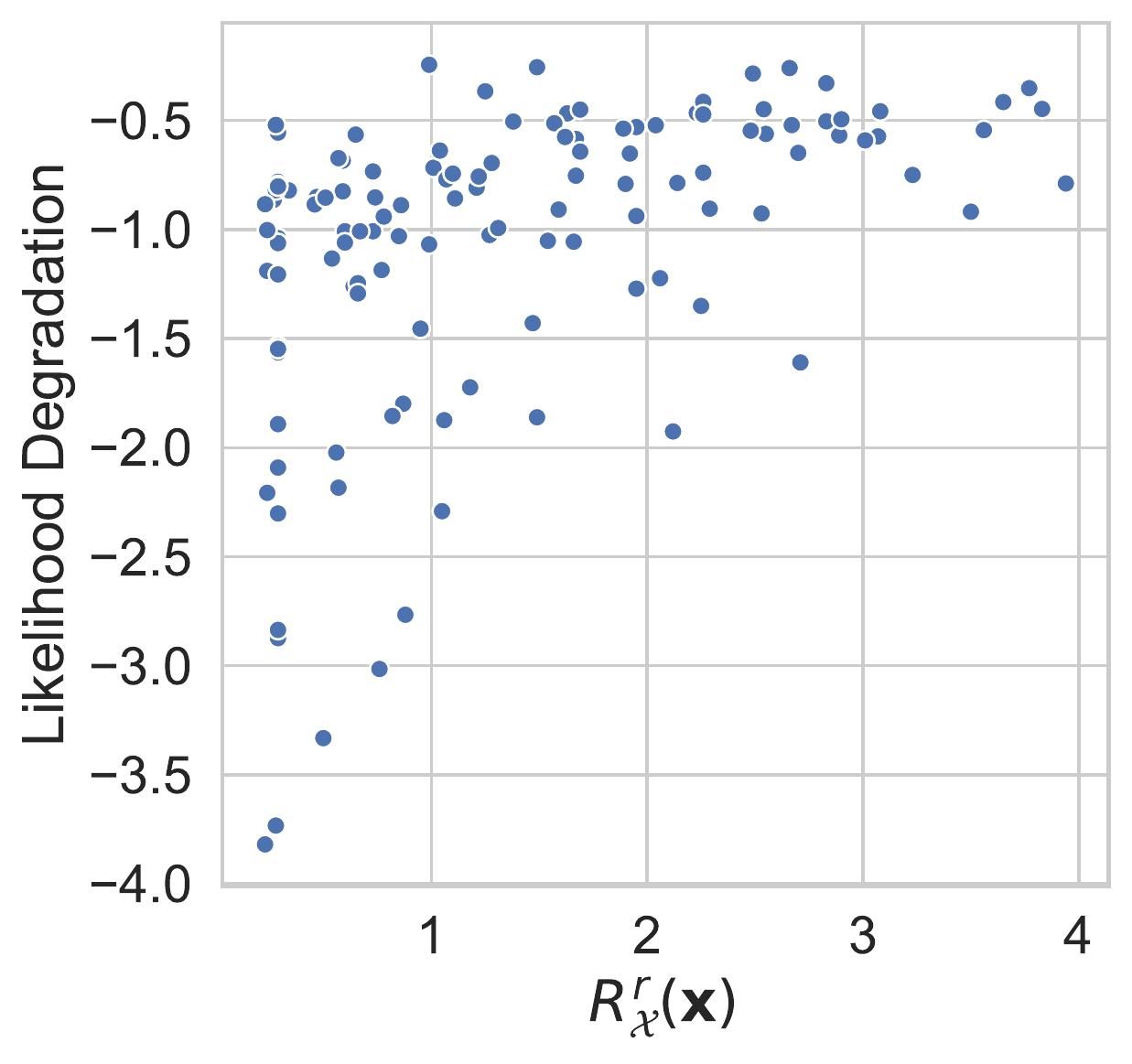}}
    \vspace{-4pt}
    \caption{
    (a-c) show the empirically estimated $R^r_{\mathcal{X}}(\v{x})$ against the bound for $R^r_{\mathcal{X}}(\v{x})$ defined in Theorem 1, ignoring higher order terms. 
    Each dot represents a network--input pair, with $5$ separately trained networks and $25$ distinct inputs considered.
    We show the line of best fit (in orange), the correlation coefficient $\rho$, and the line $y=x$ (in red) representing the theoretical bound itself.
    (d-f) show the relative log likelihood degradation resulting from a `maximum-damage' adversarial attack against the numerically estimated $R^r_{\mathcal{X}}(\v{x})$ for these same VAEs and inputs (see Section~\ref{sec:rrobustadv}).
    }
    \vspace{-8pt}
    \label{fig:R_robustness_correlation}
\end{figure*}

This gives us margins for which VAEs are certifiably robust, up to first order expansions, to adversarial perturbations on their inputs; they have similar forms to the sensitivity margins for classifiers defined by \cite{Sokolic2017,Jakubovitz2018} in that both scale \textit{inversely} with the network Jacobian.
More generally, these results provide insights into the features which lead to robust VAEs.
As shown in Figure \ref{fig:R_robustness_correlation}, the bound seems to be relatively tight in practice, even when attacking both the encoder mean \textit{and} variance.  It also has a near-linear relationship with the empirically estimated robustness, such that it forms a powerful and convenient robustness metric in its own right.

Examining the bound, we see that for a given $r$, 
$R^r_\mathcal{X}(\v{x})$ increases as the stochasticity of the encoder, i.e $\v{\sigma}_\phi(\v{x})$, increases, provided that this does not overly affect $\Phi^{-1}(p(||\Delta(\v{x})||_2 \leq r))$ (see below).
As $\v \sigma_\phi(\v{x})$ tends to 0 we recover the deterministic setting, which confers no additional protection to attack and as $\v \sigma_\phi(\v{x})$ increases we obtain increased protection.
 However, $\v{\sigma}_\phi(\v{x})$ can also have a knock--on effect on $\Phi^{-1}(p(||\Delta(\v{x})||_2 \leq r))$.
When $\v{\sigma}_\phi(\v{x})$ is small, this knock--on effect will typically be small relative to the direct effect of changing $\v{\sigma}_\phi(\v{x})$, but as it becomes large there is always a point where this knock--on effect will take over.
Namely, our reconstructions will become increasingly poor and $\Phi^{-1}(p(||\Delta(\v{x})||_2 \leq r))$ will eventually become negative, such that $r$-robustness does not hold even without perturbation.
We can quantify this by noting that there is always a minimum $r$ for $r$-robustness to be satisfied in~\eqref{eq:robust_satisfy}.
We derive a bound characterizing the minimum $r$ for which we can confirm robustness in Appendix~\ref{app:min_r}.

\section{Empirical Investigations}

\begin{figure*}[h]
         \centering
      {\includegraphics[width=0.25\textwidth]{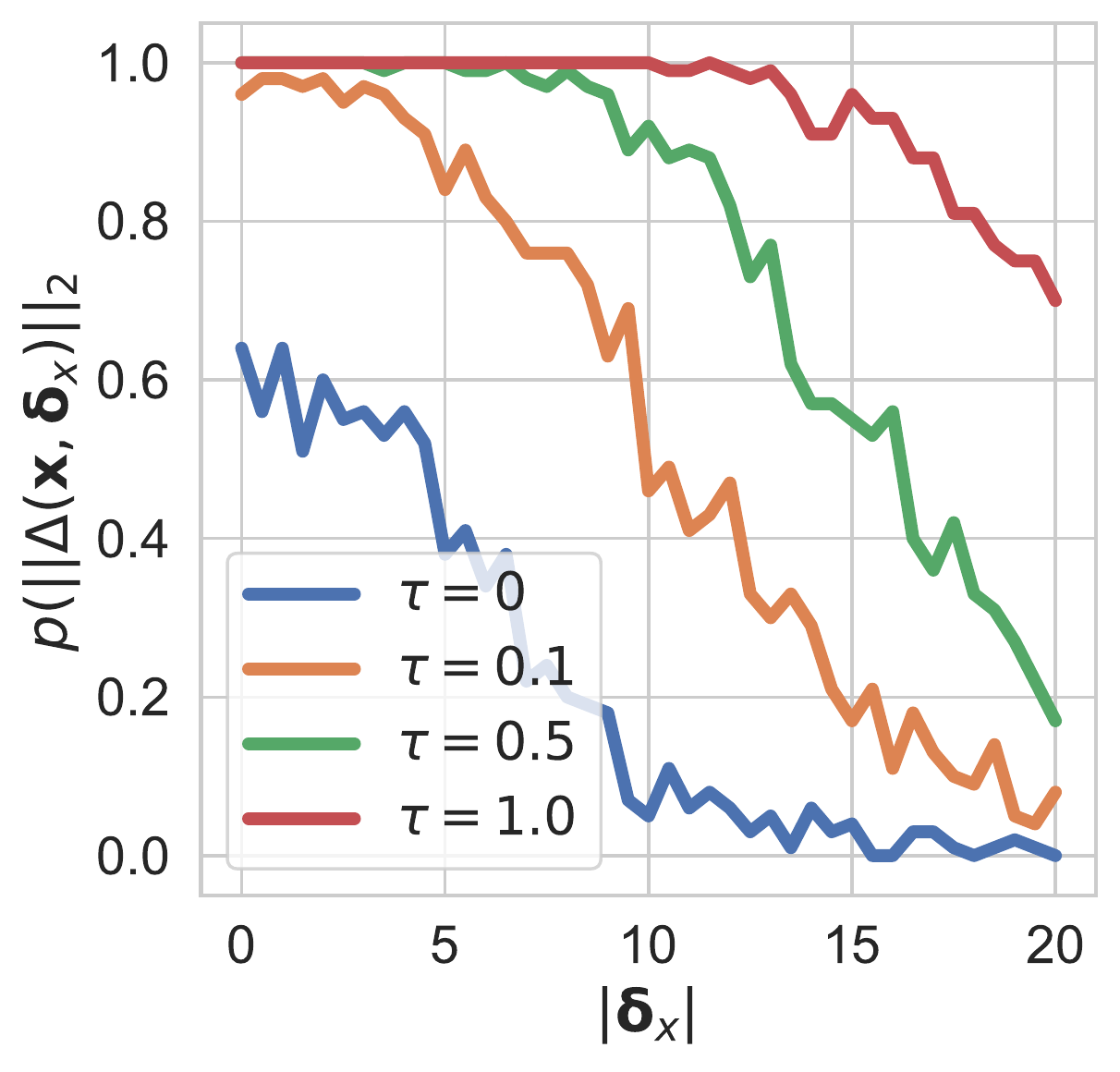}}
           \hspace{1.25em}
     {\includegraphics[width=0.27\textwidth]{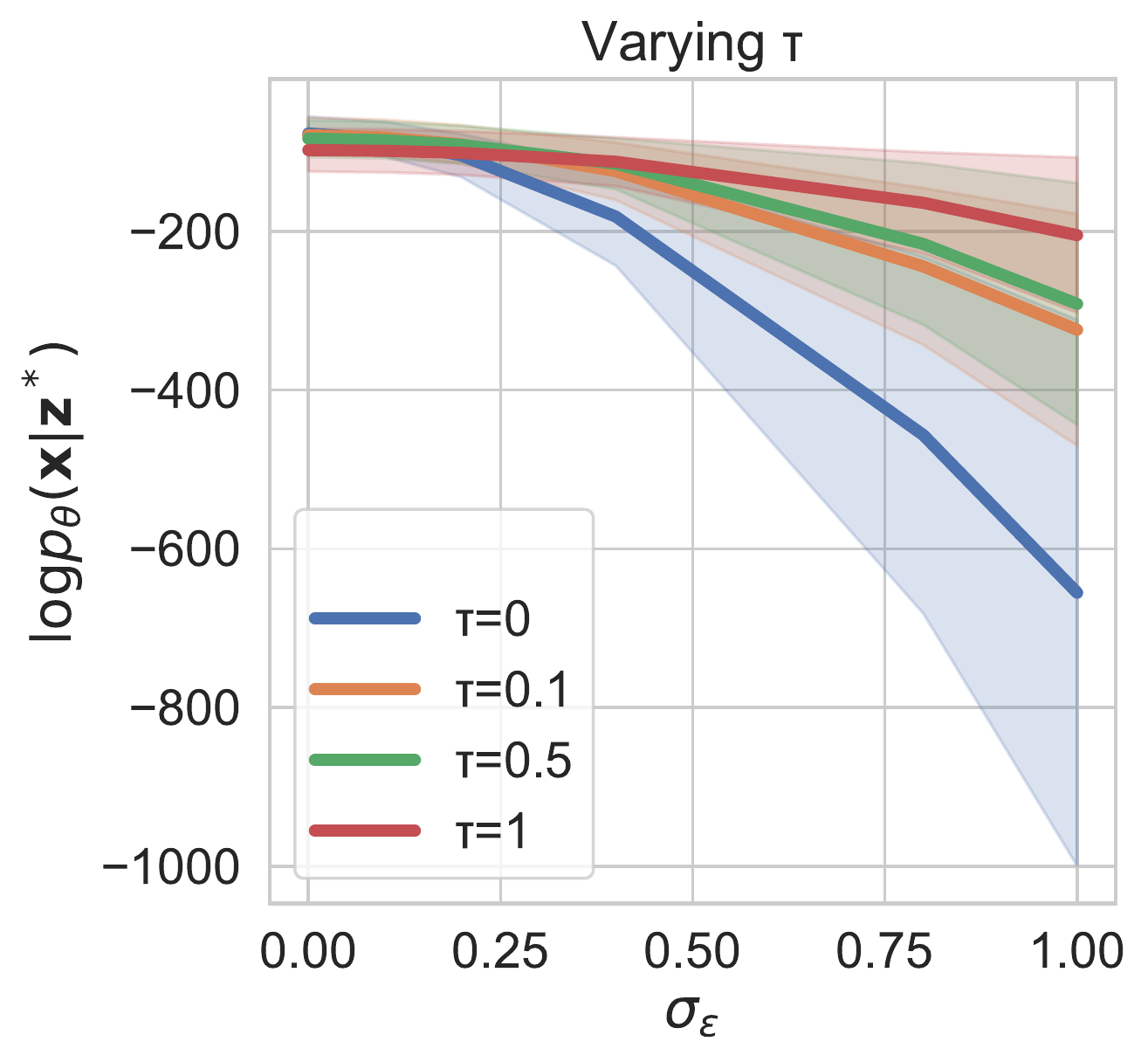}} 
          \hspace{1.25em}
    {\includegraphics[width=0.27\textwidth]{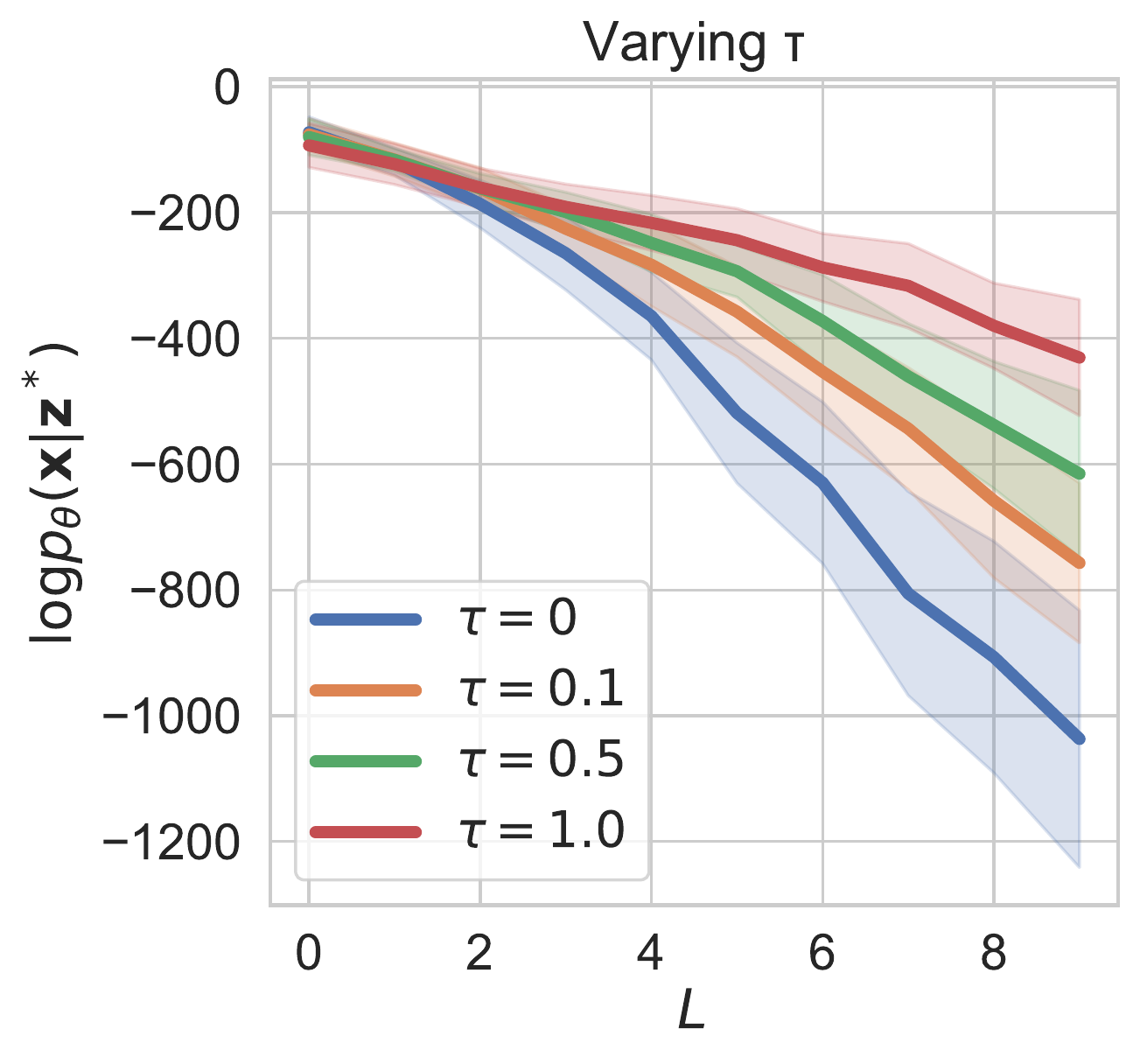}}
    \caption{Ablation study on the bounds defined by Theorem \ref{prop:margin_x}.
    We train models  on MNIST with $\sigma_\phi(\v{x})$ offset by a constant $\tau \in [0, 0.1, 0.5, 1]$.
    [Left] probability that reconstructions in $\mathcal{X}_{\mathrm{recon}}$ fall within a radius $r=4$ centered on the `maximum likelihood' reconstruction, $p(||\Delta(\v{x}, \v{\delta}_x)||_2 \leq r)$, as a function of $|\v{\delta}_x|$, the magnitude of perturbations. 
    $R^r_\mathcal{X}(\v{x})$ is the radius $|\v{\delta}_x|$ for which $p(||\Delta(\v{x}, \v{\delta}_x)||_2 \leq 4) > 0.5$ and clearly increases with $\tau$.
    [Center]  we add noise $\sim\mathcal{N}(0,\sigma^2_{\epsilon})$ to a point $\v{x}$ forming a noisy $\v{x}^*$ and $\v{z}^*$, and measure the likelihood of the original point $\v{x}$ under this noisy embedding.
    [Right] we show the same plot where the perturbations are \textit{maximum damage} attacks, Eq \eqref{eq:adv_loss}, where
    $L$ is the maximum allowed magnitude of the attack distortion. 
    Large $\tau$ VAEs have high likelihoods for the original point $\v{x}$ as $L$ and $\sigma^2_{\epsilon}$ increase: they are robust to attack and effective denoising models.
    Confidence intervals are the standard deviations of values over the entire MNIST dataset.
    }
     \label{fig:ablation}
\vspace{-0.11cm}
\end{figure*}

We now consider a series of empirical investigations to back up our frameworks and theoretical results.
We start by assessing whether the concept of $r$-robustness corresponds to more commonly used measures of model robustness.
Here we estimate $R^r_{\mathcal{X}}(\v{x})$ numerically as in Appendix \ref{app:emp_calc} and as demonstrated in Fig \ref{fig:R_demo}.
Using these empirical estimations, we establish connections between $r$-robustness and other performance metrics during adversarial attack, confirming that larger $R^r_{\mathcal{X}}(\v{x})$ correspond to model--input pairs that are more robust to adversarial attacks. 

\subsection{$r$-robustness and Adversarial Settings}
\label{sec:rrobustadv}
We begin by evaluating our metrics in adversarial settings. 
We want to find the most damaging perturbations $\v{\delta}_x$ that challenge the robustness metrics we have derived.
We consider an adversary trying to distort the input data to maximally disrupt a VAE's reconstruction.
Our adversary maximizes, wrt $\v{\delta}_x$, the distance between the VAE reconstruction and the original datapoint $\v{x}$, a novel adversarial attack we call \textit{maximum damage}. 
We attack the encoder mean \textit{and} variance: 
\begin{align}
    \v{\delta}_x^* =\argmax\nolimits_{\v{\delta}_x}\big(&||g_{\theta}(\v{\mu}_\phi(\v{x} + \v{\delta}_x)+ \v{\eta}\sigma_\phi(\v{x} + \v{\delta}_x)) \nonumber\\
    &- g_{\theta}(\v{\mu}_\phi(\v{x}))||_2\big).
    \label{eq:adv_loss}
\end{align}
We evaluate the success of an attack as follows.
Given an embedding $\v{z}^*$ formed from the mean encoding of $\v{x} + \v{\delta}_x$, we measure the likelihood of the original point $\v{x}$ and quantify the degradation in model performance as the relative log likelihood degradation ($|(\log p(\v{x}|\v{z}^*) - \log p(\v{x}|\v{z}))|/\log p(\v{x}|\v{z})$), where $\v{z}$ is the embedding of $\v{x}$. 
Fig \ref{fig:R_robustness_correlation} (d-f) shows that as $R^r_{\mathcal{X}}(\v{x})$ increases this degradation lessens, indicating less damaging attacks.
As such, larger margins for $r$-robustness correspond to models that are more robust to attack.

\subsection{Evaluating the derived bounds}
Using Theorem \ref{prop:margin_x}, we can gain insights into which characteristics of a VAE contribute to robustness.
The encoder variance plays a prominent role and is a parameter that is easy to control. 
The encoder Jacobian is also present, but we found that controlling such values directly can be difficult.
Penalizing the norm of this Jacobian in the VAE training objective degrades VAE generative performance, making it difficult to compare models. %
As such we restrict our experiments to varying the encoder variance. 
We do so by training models that have $\v{\sigma}_\phi(\v{x})$ offset by a constant $\tau$, such that we artificially increase the encoder variance minimum.
In Fig \ref{fig:ablation} we show that as $\tau$ increases, the numerically estimated $R^r_\mathcal{X}(\v{x})$ also increases, supporting our claim that models with larger encoder variances have larger margins of robustness.
This figure also shows that likelihood of reconstructing the original input $x$ increases as $\tau$ increases in both an adversarial attack setting and a noisy perturbation setting.
We thus see that larger $\tau$ also provides more effective denoising properties.

\section{Robustness of Disentangled VAEs}

\begin{figure*}[h!]
    \centering
    \subfloat[][$\beta=0.1$]{
    \begin{tabular}[b]{c}%
    \includegraphics[width=0.25\textwidth]{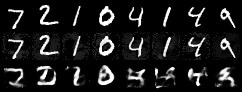}
    \end{tabular}
    } 
    \hspace{0mm}
     \subfloat[][$\beta=1$]{
     \begin{tabular}[b]{c}%
     \includegraphics[width=0.25\textwidth]{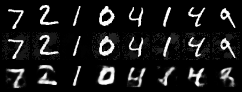}
      \end{tabular}
     } 
     \subfloat[][$\beta=10$]{
      \begin{tabular}[b]{c}
     \includegraphics[width=0.25\textwidth]{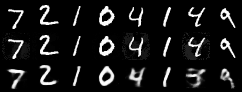}
      \end{tabular}
  } 

    \caption{
    For $\beta$-VAEs trained with $\beta \in \{0.1,1,10\}$ we show in consequentive rows first the original data point, a perturbed version made by maximum damage adversarial attacks, and then the reconstruction given by the model. 
    As $\beta$ increases the models become more robust to attack. 
    }
    \label{fig:MNIST_recons}
    \vspace{-10pt}
\end{figure*}
\begin{figure*}[h]
         \centering
    \raisebox{0em}{
   \subfloat[Sampled $R^r_{\mathcal{X}}(\v{x})$
     ]{\includegraphics[height=0.215\textwidth]{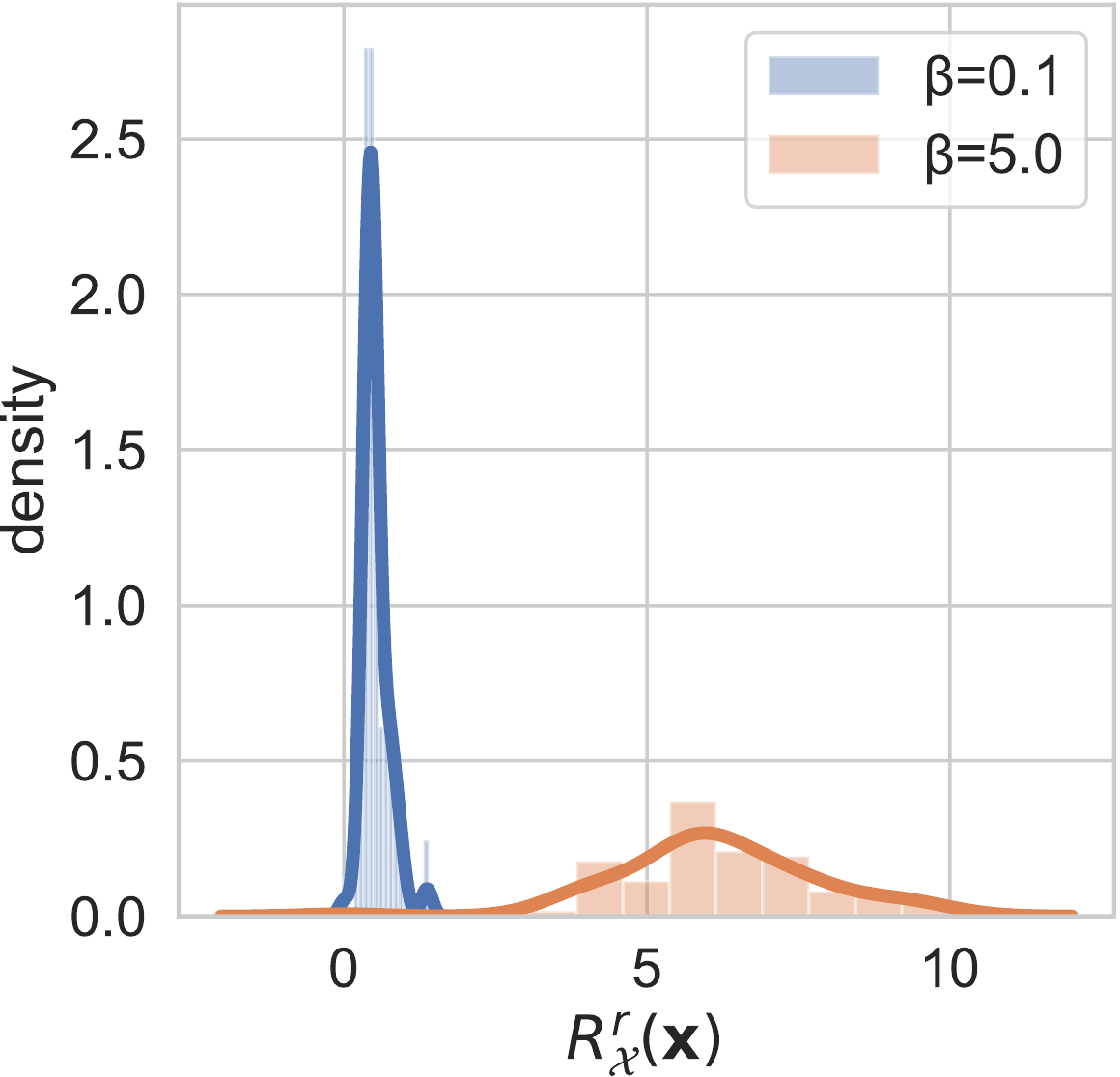}}}
    \subfloat[Adv Attack]{
    \includegraphics[width=0.24\textwidth]{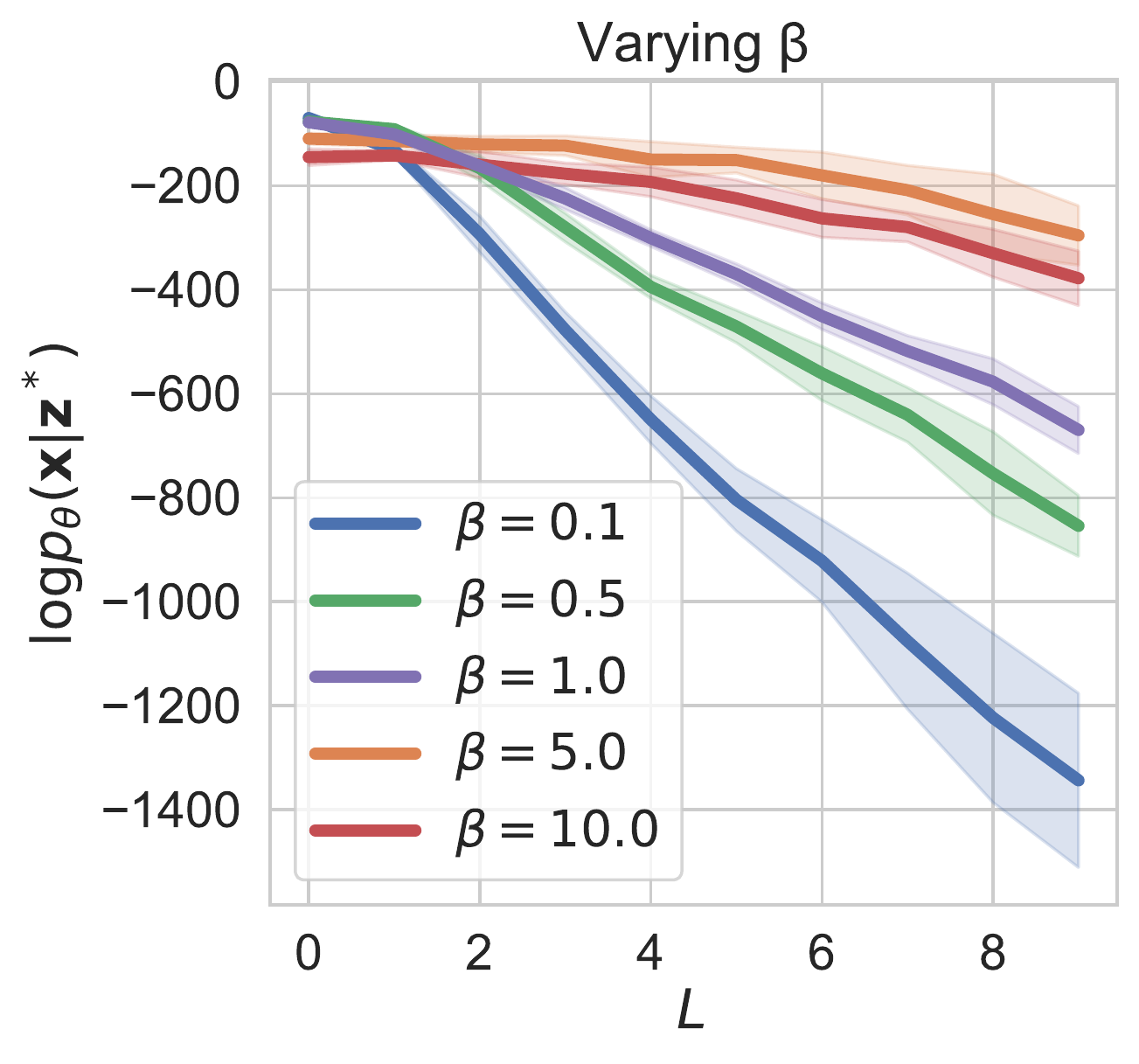}
    \label{fig:beta-adv}}
    \subfloat[][Encoder Variance]{\includegraphics[width=0.24\textwidth]{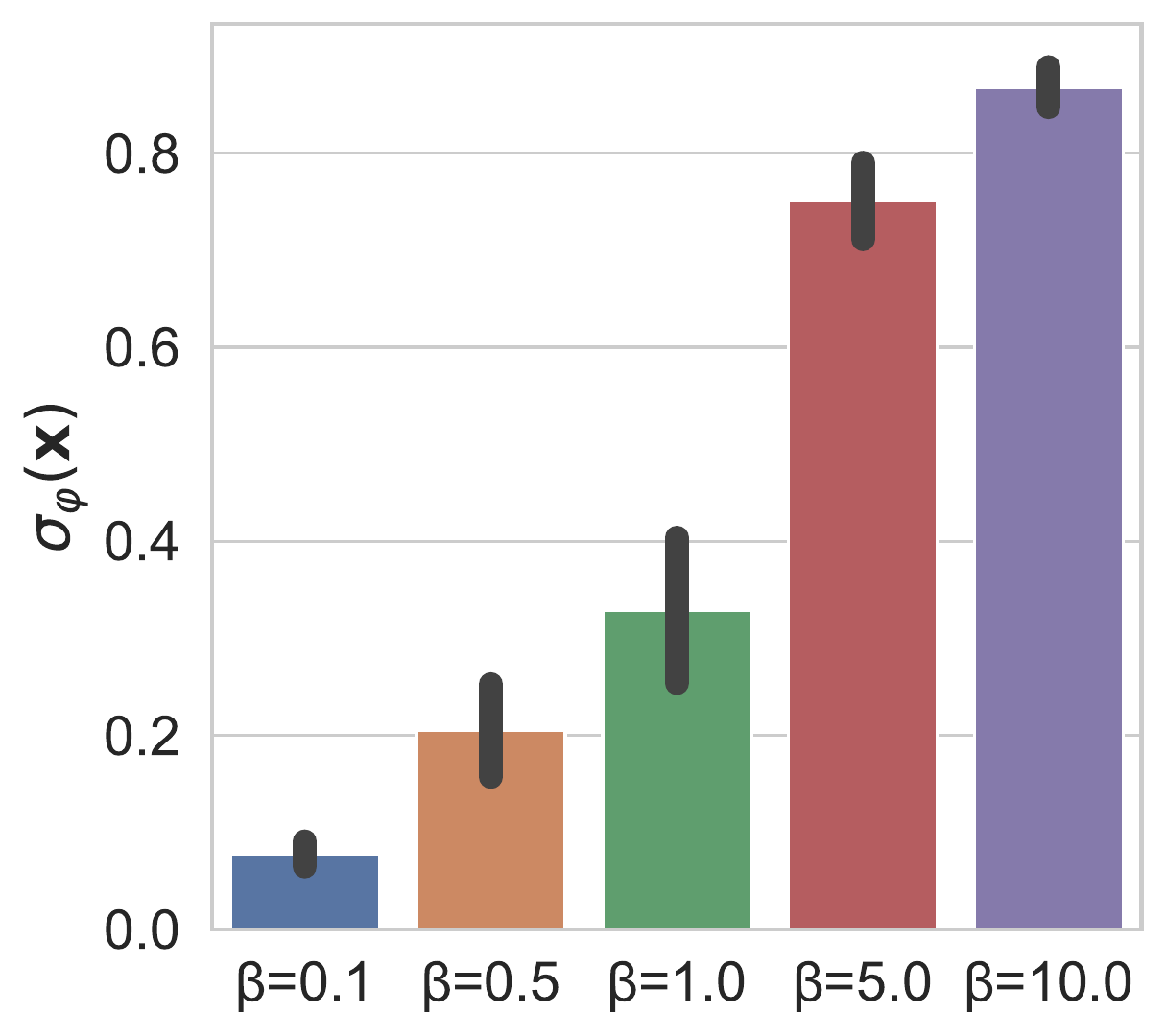}} 
     \subfloat[][Encoder Jacobian]{\includegraphics[width=0.24\textwidth]{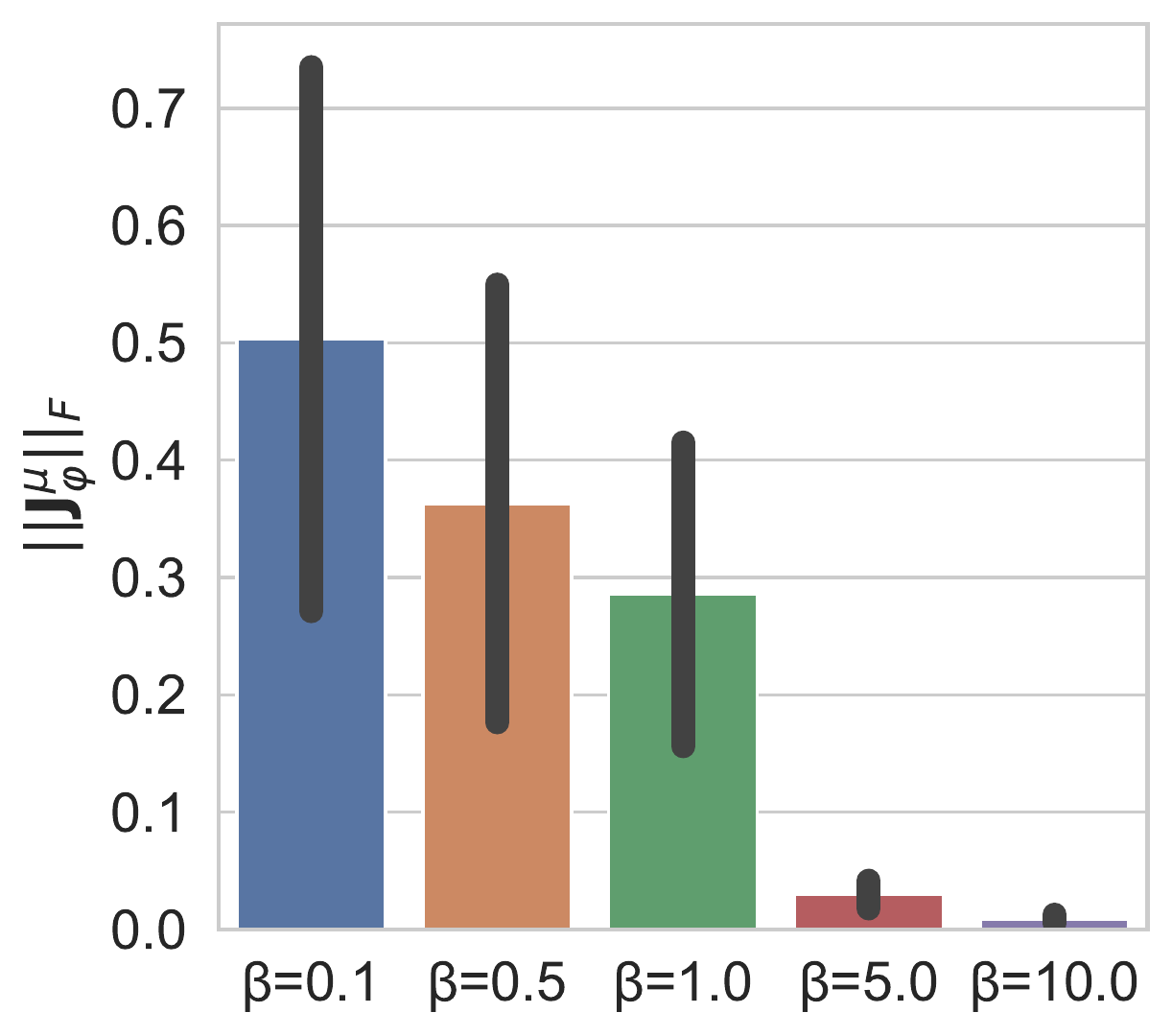}} 

    \caption{
     (a) distribution of the numerically estimated $R^r_{\mathcal{X}}(\v{x})$ ($m=0.5$) across the MNIST dataset.  We see that $R^r_{\mathcal{X}}(\v{x})$ increases dataset-wide for larger $\beta$.
    (b) likelihood of the original input given a maximum damage adversarial attack as in Eq \eqref{eq:adv_loss}.
    $L$ is the maximum allowed norm of the attack.  Large $\beta$ models retain high likelihoods even for large $L$, meaning they are robust to attack. 
    (c) and (d) show that the encoder variance increases and the encoder Jacobian norm ($||\v{J}^{\mu}_{\phi}(\v{x})||_F$) decreases as $\beta$ increases, supporting our analysis that the changes in these values underpin the robustness observed. 
    Confidence intervals for all plots are the standard deviation of values over the entire MNIST dataset.
    See Appendix \ref{app:bvae_exp} for similar experiments on other datasets.}
     \label{fig:mnist_experiments_beta}
\end{figure*}
We now apply our analysis to disentangling methods, which have empirically been shown to be more robust to adversarial attacks and noisy data~\citep{Willetts2019a}. 
First, we demonstrate this visually in Fig \ref{fig:MNIST_recons} where we see that $\beta$-VAEs are more resilient to attack as $\beta$ increases, and thus implicitly latent space overlap increases \citep{Mathieu2019}.
Second, we provide analysis to show that disentangling methods induce models with smaller encoder Jacobian norms \textit{and} larger 
posterior variances, implying that they have larger margins $R^r_\mathcal{X}(\v{x})$ by Theorem \ref{prop:margin_x}, a result we confirm empirically.

\paragraph{Disentangling increases encoder variance}
Empirically, increasing $\beta>1$ in a $\beta$-VAE increases the variance of the trained encoder, saturating at the variance of the prior for large $\beta$~\citep{Mathieu2019, Locatello2019}. 
We can shed light on this behavior by finding the optimum forms of the posterior distribution under these objective functions using calculus of variations.
We find that the optimal posterior has the form of a tempered or fractional posterior \citep{Holmes2017, Wenzel2020, miller2019robust}, with an exponent  $1/\beta$ on the likelihood:
\begin{restatable}{theorem}{BVAEopt}
For a $\beta$-VAE, the optimum posterior is:
\[q_\phi(\v{z}|\v{x}) \propto p(\v{z})p_\theta(\v{x}|\v{z})^{1/\beta}\]
\label{prop:beta_optim}
\vspace{-2em}
\end{restatable}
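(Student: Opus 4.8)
The plan is to treat the $\beta$-VAE objective $\ELBO_\beta(\v x)$ as a functional of the posterior density $q := q_\phi(\v z|\v x)$ for a fixed input $\v x$, and to optimise over \emph{all} valid densities on $\mathcal Z$ rather than over a fixed parametric family, using the calculus of variations. Writing the objective explicitly,
\begin{align}
\ELBO_\beta(\v x) = \int q(\v z)\log p_\theta(\v x|\v z)\intdz - \beta \int q(\v z)\log\frac{q(\v z)}{p(\v z)}\intdz, \nonumber
\end{align}
the only constraint I need to impose is normalisation, $\int q(\v z)\intdz = 1$; non-negativity will turn out to be enforced automatically by the form of the solution.

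First I would adjoin the normalisation constraint with a Lagrange multiplier $\lambda$ to form the Lagrangian $\mathcal J[q] = \ELBO_\beta(\v x) + \lambda\big(\int q(\v z)\intdz - 1\big)$, and then take the pointwise functional derivative of $\mathcal J$ with respect to $q(\v z)$. Using $\frac{\delta}{\delta q(\v z)}\big[q\log(q/p)\big] = \log(q(\v z)/p(\v z)) + 1$, the stationarity condition becomes
\begin{align}
\log p_\theta(\v x|\v z) - \beta\Big(\log\frac{q(\v z)}{p(\v z)} + 1\Big) + \lambda = 0. \nonumber
\end{align}
Solving this for $q(\v z)$ gives $q(\v z) = p(\v z)\,p_\theta(\v x|\v z)^{1/\beta}\exp\big((\lambda-\beta)/\beta\big)$, and since the exponential factor is independent of $\v z$, it is absorbed into the normalising constant fixed by the constraint, yielding $q_\phi(\v z|\v x) \propto p(\v z)\,p_\theta(\v x|\v z)^{1/\beta}$ as claimed.

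Finally I would confirm that this stationary point is the global maximiser rather than a saddle or minimiser. This follows from concavity: the reconstruction term is linear in $q$, and $-\beta\KL(q\|p)$ is concave in $q$ for $\beta>0$ since $\KL$ is convex in its first argument, so $\mathcal J$ is concave and its unique critical point is the global maximum. The main point to flag is a subtlety rather than a genuine obstacle: this is an optimum over the entire space of densities, so the result should be read as the \emph{target} that the variational family would recover given sufficient flexibility. I would also note the mild regularity requirement that $p(\v z)\,p_\theta(\v x|\v z)^{1/\beta}$ be integrable in $\v z$, so that the normalising constant — and hence the optimal $q$ — is well defined.
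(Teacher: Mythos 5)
Your proposal is correct and takes essentially the same route as the paper's proof: calculus of variations on the ELBO viewed as a functional of $q$, with a Lagrange multiplier enforcing normalisation, yielding the identical stationarity condition $\log p_\theta(\v{x}|\v{z}) - \beta \log q + \beta \log p(\v{z}) - \beta + \lambda = 0$ and hence the tempered posterior (the paper carries the empirical data density $\rho(\v{x})$ through the computation and perturbs $q \to q + \epsilon$ explicitly, but this is the same calculation done pointwise in $\v{x}$). Your closing observations---that concavity of the objective in $q$ makes the stationary point a global maximum, and that integrability of $p(\v{z})p_\theta(\v{x}|\v{z})^{1/\beta}$ is needed for the normalising constant to exist---are mild strengthenings the paper does not state, but the core argument is identical.
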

The proof is given in Appendix \ref{app:opt_post}.

This result gives the optimal posterior as a function of $\beta$.
It also tells us the $\beta$-VAE's optimal posterior in the limit of large $\beta$ is the prior, as we would expect.
Because the prior variance is naturally larger than that of the encoder, the encoder variance increases with $\beta$.
\paragraph{Disentangling penalizes Jacobian norm}
Assuming a Gaussian $p_\theta(\v{x}|\v{z})$, an encoder covariance optimal to first order, and activation functions that are piecewise-linear, the $\beta$--VAE objective can be approximated as~\citep{Kumar2020} 
\begin{align}
    \min_{\phi, \theta} \frac{1}{2}||\v{x} - g_{\theta}(\v{\mu}_\phi(\v{x}))||^2 + \frac{\beta}{2}||\v{J}^{\mu}_{\phi}(\v{x})\v{x}||^2_F \nonumber \\ + \frac{\beta}{2} \log |\v{I} + \frac{1}{\beta}\v{J}_{\theta}(\v{\mu}_\phi(\v{x}))\v{J}^T_{\theta}(\v{\mu}_\phi(\v{x}))|,
\label{eq:simple_VAE_obj}
\end{align}
As $\beta$ increases $||\v{J}^{\mu}_{\phi}(\v{x})||^2_F$ is more penalised and we expect to learn encoders with smaller Jacobians.

Taking these results together, we expect two things to occur as $\beta$ increases:
the encoder variance should increase by Theorem \ref{prop:beta_optim} and the norm of the encoder Jacobian should decrease.
We confirm this empirically in Fig \ref{fig:mnist_experiments_beta}(c,d). 
By Theorem \ref{prop:margin_x}
these two effects of increasing $\beta$ should increase $R^r_{\mathcal{X}}(\v{x})$ in tandem.
In Fig \ref{fig:mnist_experiments_beta}(a) we confirm that the numerical estimate for $R^r_{\mathcal{X}}(\v{x})$ increases \textit{dataset-wide} for large $\beta$, that is we get a larger value for $R_{\mathcal{X}}^r$, or metric for the overall robustness of the VAE.
In Appendix \ref{app:bvae_exp}, we also show that the distribution of the \emph{bound} for $R^r_{\mathcal{X}}(\v{x})$ from Theorem \ref{prop:margin_x} increases dataset-wide with increasing $\beta$. 
In both cases it is noticeable that $R^r_{\mathcal{X}}(\v x)$ is quite a well-behaved distribution, with reasonably low variance and skew.
This suggests that $R_{\mathcal{X}}^r$ can be reliably estimated in practice. 

In Fig~\ref{fig:beta-adv} we further show that these larger $R_{\mathcal{X}}^r$ values translate into larger likelihoods of the original input under adversarial attack, while 
in Appendix \ref{app:bvae_exp} we confirm that model sensitivity to noise is improved for larger $\beta$.
We note, however, that having a $\beta$ that is too large will completely undermine reconstructions~\citep{Higgins2017,Chen2018} and lead to VAEs that are never robust because we cannot confirm $r$-robustness, even without input perturbations (Eq~\eqref{eq:robust_satisfy}).
See Appendix \ref{app:emp_calc_pAR_res} for results on this.

\vspace{8pt}
\section{Conclusion}
We have defined a novel robustness metric tailored to probabilistic generative models, $r$-robustness, which can be used to assess the robustness of VAEs to adversarial attack. 
We defined a margin on a VAE's input space within which it is $r$-robust to perturbations and show that small norms of the encoder Jacobian and larger encoder variances are core contributors to robustness. 
Further, we offered theoretical and empirical analysis based on this margin, demonstrating that existing disentangling methods increase robustness by altering the optimal encoder variance and the norm of the encoder Jacobian.

\newpage
\acknowledgments{
This research was directly funded by the Alan Turing Institute under Engineering and Physical Sciences Research Council (EPSRC) grant EP/N510129/1.
AC was supported by an EPSRC Studentship.
MW was supported by EPSRC grant EP/G03706X/1.
SR gratefully acknowledges support from the UK Royal Academy of Engineering and the Oxford-Man Institute.
CH was supported by the Medical Research Council, the Engineering and Physical Sciences Research Council, Health Data Research UK, and the Li Ka Shing Foundation

We thank Tomas Lazauskas, Jim Madge and Oscar Giles from the Alan Turing Institute's Research Engineering team for their help and support.}

\bibliographystyle{apalike2}
\bibliography{references.bib}

\newpage
\appendix
\setcounter{equation}{0}
\setcounter{figure}{0}
\renewcommand\thefigure{\thesection.\arabic{figure}}
\setcounter{table}{0}
\renewcommand{\thetable}{\thesection.\arabic{table}}
\setcounter{definition}{0}
\renewcommand{\thedefinition}{\thesection.\arabic{definition}}
 \onecolumn
\clearpage
  \hsize\textwidth
  \linewidth\hsize \toptitlebar {\centering
  {\Large\bfseries Appendix for Towards a Theoretical Understanding of the Robustness of Variational Autoencoders \par}}
 \bottomtitlebar \vskip 0.1in
 \thispagestyle{appendixpage}

\section{Choosing $r$ for $r$-robustness}
\label{app:min_r}

\begin{prop}
For any input and perturbation, a necessary requirement for a VAE with a Gaussian encoder to satisfy $r$-robustness is that
 \begin{equation}
     r > \sqrt{2\mathrm{Tr}(\v{\Sigma}(\v{x})})+\mathcal{O} (\v \varepsilon)
 \end{equation}
 where $\v{\Sigma}(\v{x}) = \v{J}_\theta( \v{\mu}_\phi(\v{x}))\v{\sigma}^2_\phi(\v{x})\v{J}^T_\theta( \v{\mu}_\phi(\v{x}))$,  $(\v{J}_\theta(\v{\mu}_\phi(\v{x}))_{i,j} = \partial g_{\theta}(\v{\mu}_\phi(\v{x}))_{i}/\partial (\v{\mu}_\phi(\v{x}))_j$, and $\mathcal{O} (\v \varepsilon)$ represents higher order terms that tend to zero in the limit $\v \sigma_{\phi}(\v x)\to \v 0$.
\label{prop:r-bound}
 \end{prop}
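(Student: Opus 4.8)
The plan is to reduce the $r$-robustness criterion \eqref{eq:robust_satisfy}, which demands $p(\|\Delta(\v{x})\|_2 \le r) > \tfrac{1}{2}$, to a statement controlled purely by the \emph{second moment} of the reconstruction displacement, and then to show that no $r \le \sqrt{2\,\mathrm{Tr}(\v{\Sigma}(\v{x}))}$ can meet it in the least-favourable case. First I would linearise the decoder about the mean embedding. Writing $\Delta(\v{x}) = g_\theta(\v{\mu}_\phi(\v{x}) + \v{\eta}\circ\v{\sigma}_\phi(\v{x})) - g_\theta(\v{\mu}_\phi(\v{x}))$ and using differentiability of $g_\theta$ at $\v{\mu}_\phi(\v{x})$ (the stated regularity assumption), a first-order Taylor expansion gives $\Delta(\v{x}) = \v{J}_\theta(\v{\mu}_\phi(\v{x}))\,(\v{\eta}\circ\v{\sigma}_\phi(\v{x})) + \mathcal{O}(\v{\varepsilon})$, where the remainder is governed by the curvature of $g_\theta$ and vanishes as $\v{\sigma}_\phi(\v{x})\to\v{0}$. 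Since $\v{\eta}\sim\mathcal{N}(\v{0},\v{I})$, the leading term is zero-mean with covariance exactly $\v{\Sigma}(\v{x}) = \v{J}_\theta(\v{\mu}_\phi(\v{x}))\,\v{\sigma}^2_\phi(\v{x})\,\v{J}^T_\theta(\v{\mu}_\phi(\v{x}))$.

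The key quantity is the expected squared displacement. Because the leading term is zero-mean, summing the per-coordinate variances gives $\expect\big[\|\Delta(\v{x})\|_2^2\big] = \mathrm{Tr}(\v{\Sigma}(\v{x})) + \mathcal{O}(\v{\varepsilon})$. Adding an input perturbation $\v{\delta}_x$ only introduces a non-negative deterministic mean-shift into this expectation, so $\mathrm{Tr}(\v{\Sigma}(\v{x}))$ remains a lower bound on the mean square ``for any input and perturbation,'' which is why the condition is stated in that generality. The decisive step is then a \emph{tightness} argument rather than a concentration one: applying Markov's inequality to the non-negative variable $\|\Delta(\v{x})\|_2^2$ at level $2\,\mathrm{Tr}(\v{\Sigma}(\v{x}))$ yields $p\big(\|\Delta(\v{x})\|_2^2 \ge 2\,\mathrm{Tr}(\v{\Sigma}(\v{x}))\big) \le \tfrac{1}{2}$, and the essential observation is that this bound is \emph{attained with equality} by the extremal two-point distribution placing mass $\tfrac{1}{2}$ at $0$ and mass $\tfrac{1}{2}$ at $2\,\mathrm{Tr}(\v{\Sigma}(\v{x}))$.

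I would then convert this into the necessity claim. Consistent with the worst-case decoder used in Theorem \ref{prop:margin_x}, a guarantee that relies only on the second moment $\mathrm{Tr}(\v{\Sigma}(\v{x}))$ must hold against the least-favourable displacement distribution compatible with it, namely the extremal one above. For that distribution, every $r$ with $0 \le r < \sqrt{2\,\mathrm{Tr}(\v{\Sigma}(\v{x}))}$ gives $p(\|\Delta(\v{x})\|_2 \le r) = \tfrac{1}{2}$, so the \emph{strict} inequality required by \eqref{eq:robust_satisfy} fails and $r$-robustness cannot be certified. Hence $r > \sqrt{2\,\mathrm{Tr}(\v{\Sigma}(\v{x}))}$ is a necessary requirement, up to the $\mathcal{O}(\v{\varepsilon})$ Taylor remainder.

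The main obstacle is exactly the direction of this bound. Markov's inequality upper-bounds a tail probability and so, taken at face value, only \emph{certifies} robustness for sufficiently large $r$; turning it into a \emph{necessary} condition forces me to exhibit the extremal distribution that saturates it and to argue that this least-favourable case is the one a distribution-free guarantee must defend against (mirroring the worst-case decoder of Theorem \ref{prop:margin_x}). A secondary technical point is controlling the Taylor remainder uniformly, so that the collected $\mathcal{O}(\v{\varepsilon})$ term genuinely vanishes as $\v{\sigma}_\phi(\v{x})\to\v{0}$; this is where the regularity of $g_\theta$ and the smallness of the encoder variance enter.
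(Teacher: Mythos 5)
Your proposal is correct in substance and shares the skeleton of the paper's proof---Taylor-expand $g_\theta$ about $\v{\mu}_\phi(\v{x})$ so that $\Delta(\v{x}) = \v{J}_\theta(\v{\mu}_\phi(\v{x}))(\v{\eta}\circ\v{\sigma}_\phi(\v{x})) + \mathcal{O}(\v{\varepsilon})$, note the leading term is $\mathcal{N}(\v{0},\v{\Sigma}(\v{x}))$, compute $\expect\lVert\Delta(\v{x})\rVert_2^2 = \mathrm{Tr}(\v{\Sigma}(\v{x}))+\mathcal{O}(\v{\varepsilon})$, and apply Markov's inequality at level $r^2 = 2\,\mathrm{Tr}(\v{\Sigma}(\v{x}))$---but it differs in two instructive ways. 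First, you obtain the trace directly by summing coordinate variances, whereas the paper takes a longer route (spectral decomposition, whitening, and writing $\lVert\v{\mathcal{E}}(\v{x})\rVert_2^2 = \sum_i \lambda_i U_i^2$ as a sum of $\Gamma(\tfrac{1}{2}, 2\lambda_i)$ variables) only to use nothing beyond the mean $\sum_i \lambda_i$; your shortcut is cleaner and loses nothing. Second, and more importantly, you confront the logical direction that the paper elides: the paper simply sets the Markov tail bound $\mathrm{Tr}(\v{\Sigma}(\v{x}))/r^2$ equal to $0.5$ and solves for $r$, which as written shows that $r>\sqrt{2\,\mathrm{Tr}(\v{\Sigma}(\v{x}))}$ \emph{suffices} for the Markov certificate, not that it is \emph{necessary} for robustness; your extremal two-point distribution (mass $\tfrac{1}{2}$ at $0$, mass $\tfrac{1}{2}$ at $2\,\mathrm{Tr}(\v{\Sigma}(\v{x}))$) saturating Markov is the right device for making ``necessary'' meaningful. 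Be clear, though, about what your patch actually proves: necessity against the least-favourable law compatible with the second moment, i.e.~that no guarantee relying only on $\mathrm{Tr}(\v{\Sigma}(\v{x}))$ can certify any $r\le\sqrt{2\,\mathrm{Tr}(\v{\Sigma}(\v{x}))}$. It is not literal necessity for the Gaussian encoder named in the proposition: the leading-order law of $\lVert\Delta(\v{x})\rVert_2^2$ is a continuous generalized $\chi^2$ with no atom at $0$---in the isotropic case $\v{\Sigma}(\v{x})=\sigma^2\v{I}$ it is $\sigma^2\chi^2_{d_\mathcal{X}}$, whose median sits strictly below its mean---so $r$-robustness in the sense of~\eqref{eq:robust_satisfy} already holds for some $r<\sqrt{2\,\mathrm{Tr}(\v{\Sigma}(\v{x}))}$, and the two-point law is not realisable here. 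That slack is inherited from the paper's own statement and proof, which never address the necessity direction at all, so your version is if anything the more honest one; likewise your side remark that an input perturbation only adds a nonnegative mean-shift term to $\expect\lVert\cdot\rVert_2^2$ is a sensible gloss on the ``any input and perturbation'' phrasing, which the paper's proof leaves untouched.
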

 
We provide empirical confirmations in Appendix \ref{app:emp_calc_pAR_res} that show that the $r$ for $r$-robustness scales with the encoder variance. 

\begin{proof} Let $g_{\theta}(\v{\mu}_\phi(\v{x}))$ be the result of mapping to the encoder mean ($\v{\mu}_\phi$) and then decoding to the likelihood mean ($g_{\theta}$), and $\Delta(\v{x}) = g_{\theta}(\v{\mu}_\phi(\v{x}) + \v{\eta}\circ \v{\sigma}_\phi(\v{x})) - g_{\theta}(\v{\mu}_\phi(\v{x}))$ we want to find a bound for $r$ for which: 

\begin{align}
    p(||\Delta(\v{x})||_2 \leq r) &> p(||\Delta(\v{x})||_2 > r)
\end{align}

where as before $\v{\eta}\sim \mathcal{N}(\v{0}, \v{I})$. 

Here we can invoke Taylor's theorem on $g_{\theta}(\v{\mu}_\phi(\v{x}) + \v{\eta}\circ \v{\sigma}_\phi(\v{x}))$ around the deterministic mapping $ \v{\mu}_\phi(\v{x})$.
Namely, if we assume that all terms in Hessian of $g_{\theta}(\v{\mu}_\phi(\v{x})$ are finite (i.e.~$|\partial^2  g_{\theta}(\v{\mu}_\phi(\v{x})_i / \partial  \v{\mu}_\phi(\v{x})_j \v{\mu}_\phi(\v{x})_k|<\infty\, \forall i,j,k$), then we have:
\begin{align}
\label{eqapp:del_z}
    g_\theta(\v{\mu}_\phi(\v{x}) + \v{\epsilon}) = g_\theta(\v{\mu}_\phi(\v{x})) + \v{J}_{\theta}(\v{\mu}_\phi(\v{x}))(\v{\eta}\circ \v{\sigma}_\phi(\v{x}))  +\mathcal{O}(\v \varepsilon)
\end{align}

where $\mathcal{O}(\varepsilon)$ represents asymptotically dominated higher order terms that go to zero in the limit of small $\v{\sigma}_\phi(\v{x}))$ and  
$\v{J}_\theta$ is 
 defined element-wise as: 
 \begin{equation}
     \v{J}_\theta(\v{\mu}_\phi(\v{x}))_{i,j} = \frac{\partial g_{\theta}(\v{\mu}_\phi(\v{x}))_{i}}{\partial (\v{\mu}_\phi(\v{x}))_j}
 \end{equation}
Note that $\v{J}_{\theta}(\v{\mu}_\phi(\v{x}))(\v{\eta}\circ \v{\sigma}_\phi(\v{x}))$ is distributed according to the multivariate Gaussian \[\mathcal{N}(0, \v{J}_{\theta}(\v{\mu}_\phi(\v{x}))\v{\sigma}^2_\phi(\v{x})\v{J}^T_{\theta}(\v{\mu}_\phi(\v{x})))\] 
Given these definitions 
\begin{align}
    &p(||\Delta(\v{x})||_2 \leq r) > p(||\Delta(\v{x})||_2 > r) \\
    &\Leftrightarrow p(||\Delta(\v{x})||_2 \leq r)  > 0.5 \\
    &\Leftrightarrow p(|| \v{J}_{\theta}(\v{\mu}_\phi(\v{x}))(\v{\eta}\circ \v{\sigma}_\phi(\v{x}))  +\mathcal{O}(\v \varepsilon)||_2 < r)>0.5   \\
    &\Leftrightarrow p(||\v{J}_{\theta}(\v{\mu}_\phi(\v{x}))(\v{\eta}\circ \v{\sigma}_\phi(\v{x}))  +\mathcal{O}(\v \varepsilon)||^2_2 < r^2)>0.5 
\end{align}

We must now consider the distribution of the square norm of $\v{\mathcal{E}}(\v{x}) = \v{J}_{\theta}(\v{\mu}_\phi(\v{x}))(\v{\eta}\circ \v{\sigma}_\phi(\v{x}))$. 
Let 
\begin{align}
    Q(\v{\mathcal{E}}(\v{x})) &= ||\v{\mathcal{E}}(\v{x})||^2_2 = \v{\mathcal{E}}(\v{x})^T \v{\mathcal{E}}(\v{x}) \\
    \v{\Sigma}(\v{x}) &= \v{J}_{\theta}(\v{\mu}_\phi(\v{x}))\v{\sigma}^2_\phi(\v{x})\v{J}^T_{\theta}(\v{\mu}_\phi(\v{x})) \\
    \v{Y}(\v{x}) &= \v{\Sigma}(\v{x})^{-\frac{1}{2}}\v{\mathcal{E}}(\v{x})
\end{align}

Given that we restrict ourselves to positive activation functions, $\v{J}_{\theta}$ is positive and $\v{\Sigma}(\v{x})$ will be positive semi definite and is invertible.
As such we have $Q(\v{\mathcal{E}}) = \v{Y}^T(\v{x})\v{\Sigma}(\v{x})  \v{Y}(\v{x})$. 

Using the spectral decomposition theorem we can write that $\v{\Sigma}(\v{x})  = \v{P}^T(\v{x}) \v{\Lambda}(\v{x}) \v{P}(\v{x})$ where $\v{P}^T(\v{x})\v{P}(\v{x}) = \v{I}$ and $\v{\Lambda}(\v{x})$ is the diagonal matrix of the eigenvalues of $\v{\Sigma}(\v{x})$,  $\lambda_1, \dots , \lambda_{d_\mathcal{X}}$, where $d_\mathcal{X}$ is the dimensionality of the data-space. 
Given that $\v{\Sigma}(\v{x})$ is positive semi definite $\v{\Lambda}(\v{x})$ will only have positive values. 

Let $\v{U}(\v{x}) = \v{P}(\v{x})\v{Y}(\v{x}) =   \v{P}(\v{x})\v{\Sigma}(\v{x}) ^{-\frac{1}{2}}\v{\mathcal{E}}(\v{x})$, which is multivariate Gaussian with identity matrix and zero mean. 
We have that: 

\begin{align}
    Q(\v{\mathcal{E}}) &=  \v{Y}^T(\v{x})\v{\Sigma}(\v{x})\v{Y}(\v{x}) \\
    &= \v{Y}^T(\v{x}) \v{P}^T(\v{x}) \v{\Lambda}(\v{x}) \v{P}(\v{x})\v{Y}(\v{x}) \\
    &= \v{U}^T(\v{x}) \v{\Lambda}(\v{x}) \v{U}(\v{x})
\end{align}

As such: 

\begin{equation}
    \sum_{i=1}^{d_{\mathcal{X}}} (\v{\mathcal{E}}_i)^2  = \v{U}^T(\v{x}) \v{\Lambda}(\v{x}) \v{U}(\v{x}) =  \sum_{i=1}^{d_{\mathcal{X}}} \\ \lambda_i(\v{U}_i(\v{x}))^2, \quad \lambda_i(\v{U}_i(\v{x}))^2 \sim  \Gamma \left(\frac{1}{2}, 2\lambda_i   \right)
\end{equation}

This comes from the fact that for $\lambda_i\v{X}, \v{X} \sim \Gamma \left(\frac{1}{2}, 2 \right)$ we have that $ \lambda_i\v{X} \sim \Gamma \left(\frac{1}{2}, 2\lambda_i\right)$.

To establish a lower bound on $r$, we use Markov's inequality which states that: 
\begin{equation}
    p(||\v{\mathcal{E}}(\v{x})  +\mathcal{O}(\v \varepsilon)||^2_2 > r^2) < \frac{\expect||\v{\mathcal{E}}(\v{x})  +\mathcal{O}(\v \varepsilon)||^2_2}{r^2}
\end{equation}
Here $\expect||\v{\mathcal{E}}(\v{x}) ||^2_2 = \expect \sum_{i=1}^{d_{\mathcal{X}}} (\v{\mathcal{E}}_i(\v{x}))^2 = \expect \sum_{i=1}^{d_{\mathcal{X}}} (\lambda_i(\v{U}_i)^2(\v{x}))^2$, which is simply $\sum_{i=1}^{d_{\mathcal{X}}} \lambda_i$. 

Recall that we want: $p(||\v{\mathcal{E}}(\v{x})  +\mathcal{O}(\v \varepsilon)||^2_2 > r^2) < 0.5$. As such 
 \begin{equation}
  r > \sqrt{2\sum_{i=1}^{d_{\mathcal{X}}} \lambda_i} +\mathcal{O}(\v \varepsilon) =\sqrt{2\mathrm{Tr}(\v{\Sigma(\v{x})})} +\mathcal{O}(\v \varepsilon)
  \end{equation}
\end{proof}

\section{Margin for $r$-robustness in $\mathcal{X}$}
\label{app:margin_X}

\Rbound*
\begin{proof}
Suppose we have an $r$ for which $r$-robustness is satisfied before any perturbation is added to the VAE input. 
First we want to establish a margin in the latent space $\mathcal{Z}$ for which our model is robust given a perturbation in the latent space. 

To do this, we first define
\begin{align}
   \Delta_{e}(\v{y}) = g_{\theta}(\v{y}) - g_{\theta}(\v{\mu}_\phi(\v{x})), \quad \v y \in \mathcal{Z}
\end{align}
where $g_{\theta}$ is the decoder network and $\v y$ is an arbitrary realization of the latents.
Note here that there is an implicit dependency on $\v x$, but as this input is fixed we will ignore this dependency throughout.
Let $\v{z} = \v{\mu}_\phi(\v{x}) + \v{\eta} \circ \v{\sigma}_\phi(\v{x})$ be the random variable produced by the embedding, i.e.~the latent sampled by the encoder.
We want to find a bound $R^r_e$ for which: 
\begin{align}
\label{eq:Rz}
\lVert \v{\delta}_z \rVert_2 \le R^r_e \quad \Leftrightarrow \quad
p(||\Delta_{e}(\v{z}+\v{\delta}_z)||_2 \leq r) &> p(||\Delta_{e}(\v{z}+\v{\delta}_z)||_2 > r)
\end{align}
such that $r$-robustness is satisfied on the decoder output when we apply a deterministic a perturbation $\v{\delta}_z$ of maximum size $R_e^r$ to the random variable $\v z$.
Note that all the stochasticity is contained in $\v \eta$.

Let $A^r$ denote the set of $\v{\delta}_z$ for which~\eqref{eq:Rz} holds
and conversely let $B^r$ be the set of $\v{\delta}_z$ for which it does not. %
By assumption in the Theorem, then $\v 0 \in A^r$ as the unperturbed input satisfies $r$-robustness.
Moreover, we also have that this unperturbed input $\v \delta_z$ has a probability $p_{\Delta}(\v 0) := p(||\Delta_{e}(\v{z})||_2 \leq  r)=p(||\Delta(\v{x})||_2 \leq  r)>0.5$ of returning a reconstruction with $r$ of $g_{\theta}(\v{\mu}_\phi(\v{x}))$.

Now we know that $\v z$ is a Gaussian random variable and so regardless of form of the decoder, $p_{\Delta}(\v \delta_z) := p(||\Delta_{e}(\v{z}+\v \delta_z)||_2\le r)$ must vary smoothly as we change $\v \delta_z$.
In essence, as we increase the size of the perturbation $\v \delta_z$ slowly from zero, the distribution of $\v z+\v \delta_z$ will still have most of its mass of the same region $\v z$.
When coupled with the fact that we have some ``excess probability'' $p_{\Delta}(\v 0)-0.5$ beyond what it is needed for $r$-robustness, there must be at certain degree to which we can increase $\v \delta_z$ before all this excess probability is ``used up''.
We can then use this to construct a bound for $R_e^r$ by considering the minimum $\v \delta_z$ to break $r$-robustness in the ``worst-case'' setting for the boundary between $A^r$ and $B^r$.

Intuitively as shown in Figure~\ref{fig:boundary_in_z}, and also more formally using the Neyman-Pearson lemma \citep{Neyman1992} by analogy to the approach of~\cite{Cohen2019}, this worst case setting will occur when the boundary between $A^r$ and $B^r$ is a straight line perpendicular to the direction of lowest variance for $\v z$ (remembering that this is Gaussian distributed) and $\v \delta_z$ is increased in this direction of lowest variance.
In essence, this is the setup where our excess probability is used up most quickly for a given $\lVert \v \delta_z\rVert_2$.
By assumption in the theorem statement, we are using a diagonal covariance encoder and so this direction of lowest variance is the latent variable corresponding to $\argmin_i \v \sigma_{\phi}(\v x)_i$.
Further, by noting that we need only consider the marginal distribution in this dimension, it is straightforward to see that the bound is reached when
\begin{align}
\label{eq:delta_z_bound}
    \lVert \v \delta_z\rVert_2 = \left(\min_i \v \sigma_{\phi}(\v x)_i\right) \Phi^{-1} \left(p_{\Delta}(\v 0)\right)
    = \left(\min_i \v \sigma_{\phi}(\v x)_i\right) \Phi^{-1} \left(p(||\Delta(\v{x})||_2 \leq  r)\right)
\end{align}
where $\Phi^{-1}$ is the inverse cumulative distribution function for a unit Gaussian, i.e.~the probit function.
Note that this yields $\lVert \v \delta_z\rVert_2=0$ if $p(||\Delta(\v{x})||_2 \leq  r)=0$, such we get the excepted result that our margin is zero is $r$-robustness only just holds without an input perturbation.

Next we need to relate $\lVert \v \delta_z\rVert_2$ to $\lVert \v \delta_x\rVert_2$.
Here we can straightforwardly invoke Taylor's theorem on $\v \mu_{\phi}(\v x+\v \delta_x)$ around the original input $\v x$.
Namely, if we assume that all terms in Hessian of $\mu_{\phi}(\v x)$ are finite (i.e.~$|\partial^2 \v \mu_{\phi}(\v x)_i / \partial \v x_j \v x_k|<\infty\, \forall i,j,k$), then we have
\begin{align}
\label{eq:del_z}
    \v \delta_z = \v \mu_{\phi}(\v x+\v \delta_x)-\v \mu_{\phi}(\v x) = \v{J}^{\mu}_{\phi}(\v{x})\v{\delta}_x +\mathcal{O}(\varepsilon)
\end{align}
where $\mathcal{O}(\varepsilon)$ represents asymptotically dominated higher order terms that go to zero in the limit of small $\v \delta_x$.
We thus have
\begin{align}
    \lVert \v \delta_x\rVert_2  \leq \frac{\lVert \v \delta_z\rVert_2}{\lVert \v{J}^{\mu}_{\phi}(\v{x})\rVert_{F}}+\mathcal{O}(\varepsilon)
\end{align}
where $\mathcal{O}(\varepsilon)$ again represents asymptotically dominated higher order terms (note though these are not the same terms as in~\eqref{eq:del_z}).
To complete the proof we now simply combine this with~\eqref{eq:delta_z_bound} to give the $\lVert \v \delta_x\rVert_2$ at which the bound is reached and thus the $R_{\mathcal{X}}^r(\v x)$ quoted in the theorem, namely
\begin{align}
R^r_\mathcal{X}(\v{x}) \geq \frac{(\min_i \v{\sigma}_\phi(\v{x})_i)\Phi^{-1}(p(||\Delta(\v{x})||_2 \leq r))}{||\v{J}^{\mu}_{\phi}(\v{x})||_F } + \mathcal{O} (\varepsilon)
\end{align}
where the inequality comes from the fact that the $\v \delta_z$ we derived was the worst possible case (i.e. smallest $\v \delta_z$ which might reach the bound).

\begin{figure}[t]
    \centering
    \includegraphics[width=0.35\textwidth]{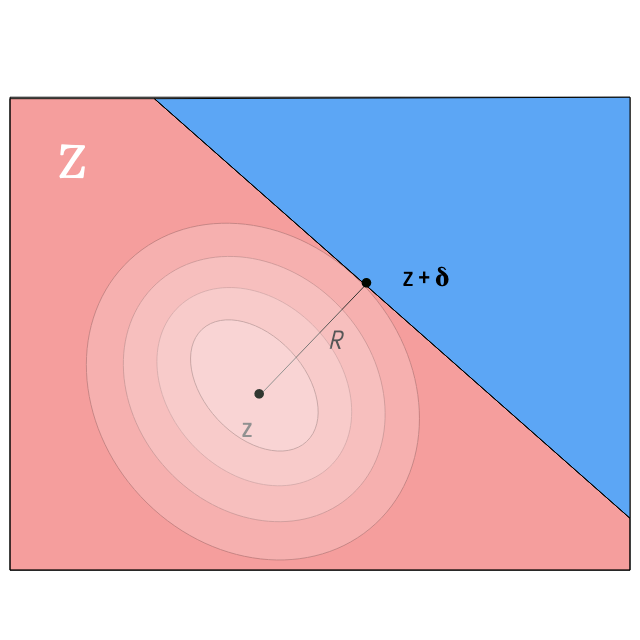} 
    \caption{Illustration of the boundary $R$ we are measuring in $\mathcal{Z}$. Red represents spaces where $A^r$ is satisfied. Blue represent spaces where $B^r$ is satisfied. 
    The concentric ellipsoids centered on $\v{z}$ are the contours of $\mathcal{N}(\v{\mu}_\phi(\v{x}), \v{\sigma}^2_\phi(\v{x}))$. $R$ is the minimum distance $\delta$ for which $A^r$ is satisfied.
    The line dividing the two spaces represent the Neyman-Pearson ``worst-case'' model and is along the direction of minimum \textit{variance}, $\min_i \v{\sigma}^2_\phi(\v{x})_i$. 
    }
    \label{fig:boundary_in_z}
\end{figure}

\end{proof}

\newpage
\section{$\beta$-VAE Optimal Posterior}
\label{app:opt_post}

\BVAEopt*

\begin{proof}
Here we use calculus of variations to obtain optimal posteriors for $\beta$-VAEs.
The objectives we are optimising are over the whole dataset $\mathcal{D}=\{\v{x}_i\}, i=1,..,N$, with empirical data density $\rho(\v{x})=\frac{1}{N}\sum_i^N \delta(\v{x}-\v{x}_i)$.

The evidence lower bound for a $\beta$-VAE is
\begin{equation}
    \ELBO_\beta(\mathcal{D};\theta,\phi)=\expect_{\rho(\v{x})}\big[\expect_{q_\phi(\v{z}|\v{x})}[\log p_\theta(\v{x}|\v{z})] - \beta \KL(q_\phi(\v{z}|\v{x})||p(\v{z}))\big].
\end{equation}

This is easier to work with written explicitly as integrals.
Note that as we are going to be finding the optimal $q_\phi(\v{z}|\v{x})$ we must add a constraint so that it integrates to 1.
\begin{equation}
    \ELBO_\beta(\mathcal{D};\theta,\phi)=\int \intdx \intdz \rho(\v{x})  \big[ q_\phi(\v{z}|\v{x}) [ \log p_\theta(\v{x}|\v{z}) -\beta \log q_\phi(\v{z}|\v{x}) + \beta \log p(\v{z})] + \lambda(\v{x})(q_\phi(\v{z}|\v{x}) -1)\big]
    \label{eq:bvae-elbo-int}
\end{equation}

For brevity, going forward $p_\theta(\v{x}|\v{z}) = p$, $p(\v{z}) = \pi$, $q_\phi(\v{z}|\v{x}) = q$.
We also view $\ELBO$ as depending on $q, p$ directly.

To proceed with calculus of variations, we substitute $q\rightarrow q+\epsilon$, where $\epsilon$ is a small function that goes to zero appropriately fast for large $\v{x}, \v{z}$.
Thus we expand $\ELBO$ to first order in $q$ to find $\frac{\delta \ELBO}{\delta q}$.
The form of $q$ for which this gradient is zero gives us the optimum $q$ for this functional.

\begin{equation}
    \ELBO_\beta(q+\epsilon)=\int \intdx \intdz \rho(\v{x}) \big[ (q+\epsilon)[ \log p -\beta \log(q + \epsilon) + \beta \log \pi] + \lambda(\v{x})(q + \epsilon -1)\big]
\end{equation}

Recall that $\log (1+x) \approx x$ to first order.
Thus $\log (q + \epsilon) \approx \log q + \frac{\epsilon}{q}$ to first order.
So,
\begin{align}
    \ELBO_\beta(q+\epsilon)=&\int \intdx \intdz \rho(\v{x}) q \big[ \log p -\beta \log q + \beta \log \pi - \beta \frac{\epsilon}{q} \big]\\
    &+\int \intdx \intdz \rho(\v{x}) \epsilon \big[ \log p -\beta \log q + \beta \log \pi - \beta \frac{\epsilon}{q} \big]\\
    &+ \int \intdx \intdz \rho(\v{x}) \lambda(\v{x})(q -1)
    + \int \intdx \intdz \rho(\v{x}) \lambda(\v{x})\epsilon
    +  \int \intdx \intdz \rho(\v{x}) O(\epsilon^2).
\end{align}
Rearranging we find
\begin{align}
    \ELBO_\beta(q+\epsilon)=& \ELBO_\beta(q) +\int \intdx \intdz \rho(\v{x}) \epsilon \big[ \log p -\beta \log q + \beta \log \pi - \beta + \lambda(\v{x}) \big]
    \nonumber \\
    &\qquad\qquad\qquad +  \int \intdx \intdz \rho(\v{x}) O(\epsilon^2) \\
    =& \ELBO_\beta(q) + \int \intdx \intdz \frac{\delta\ELBO_\beta}{\delta q} \epsilon + \int \intdx \intdz \rho(\v{x}) O(\epsilon^2)
\end{align}
At the optimum value of $q$ the functional will have vanishing functional derivative $\frac{\delta\ELBO_\beta}{\delta q}$, so
\begin{align}
    & \log p -\beta \log q + \beta \log \pi - \beta + \lambda(\v{x}) = 0,\\
    & \log q = \frac{1}{\beta} \log p + \log \pi + C(\v{x}).
    \end{align}
Exponentiating we find the optimal $q$ to be
\begin{equation}
q_\phi(\v{z}|\v{x}) = \frac{1}{Z(\v{x})} p_\theta(\v{x}|\v{z})^{\frac{1}{\beta}}p(\v{z}),
\end{equation}
where $Z$ is an appropriate normalising constant.
This completes the proof.
\end{proof}

\section{Empirical Calculation of the Bounds}
\label{app:emp_calc}

\subsection{Estimating the minimum $r$}
\label{app:emp_calc_pAR}

\subsubsection{Results}
\label{app:emp_calc_pAR_res}

\begin{figure}[h]
    \centering
    \subfloat[][]{\includegraphics[width=0.3\textwidth]{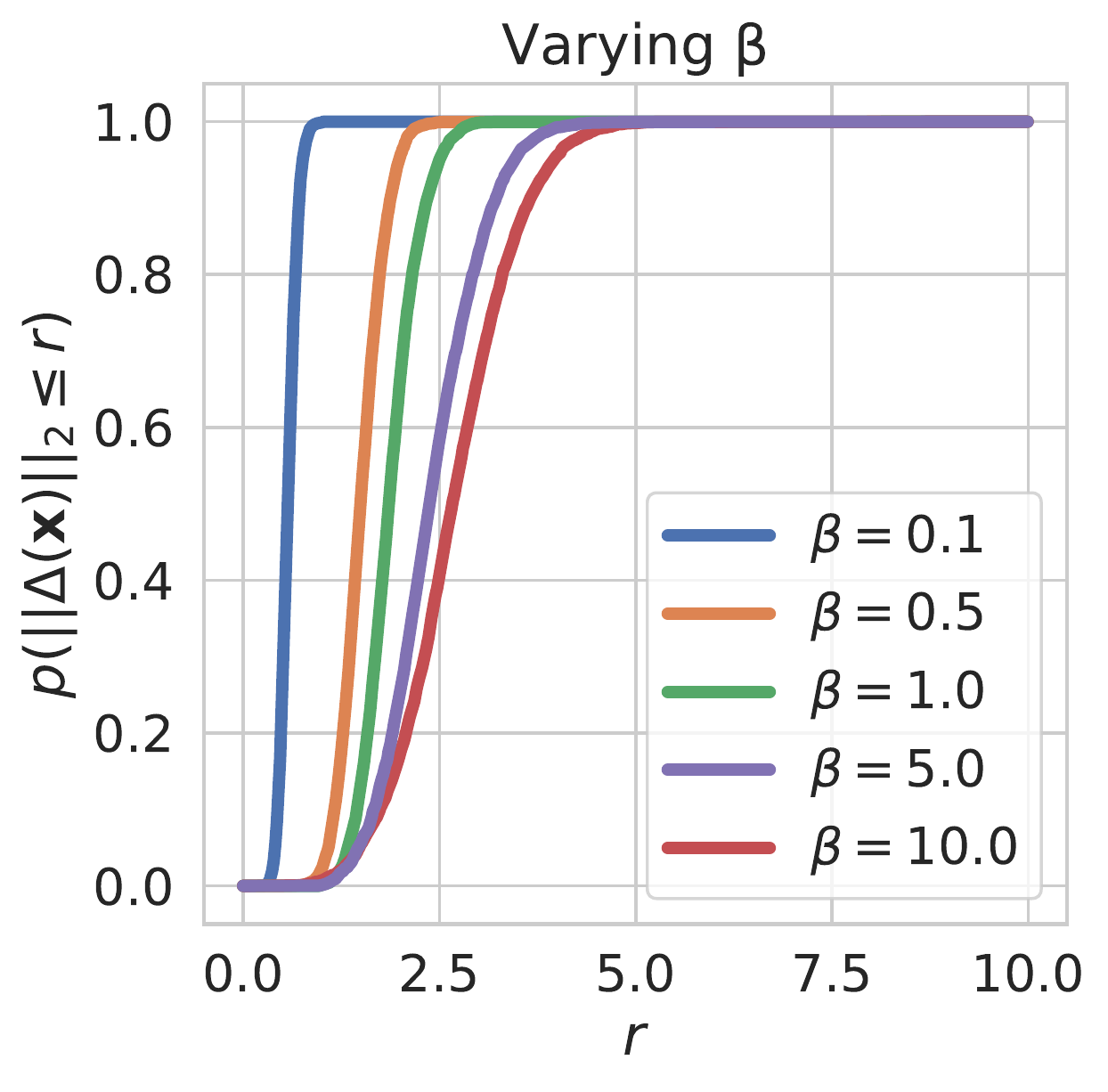}}
    \subfloat[][]{\includegraphics[width=0.3\textwidth]{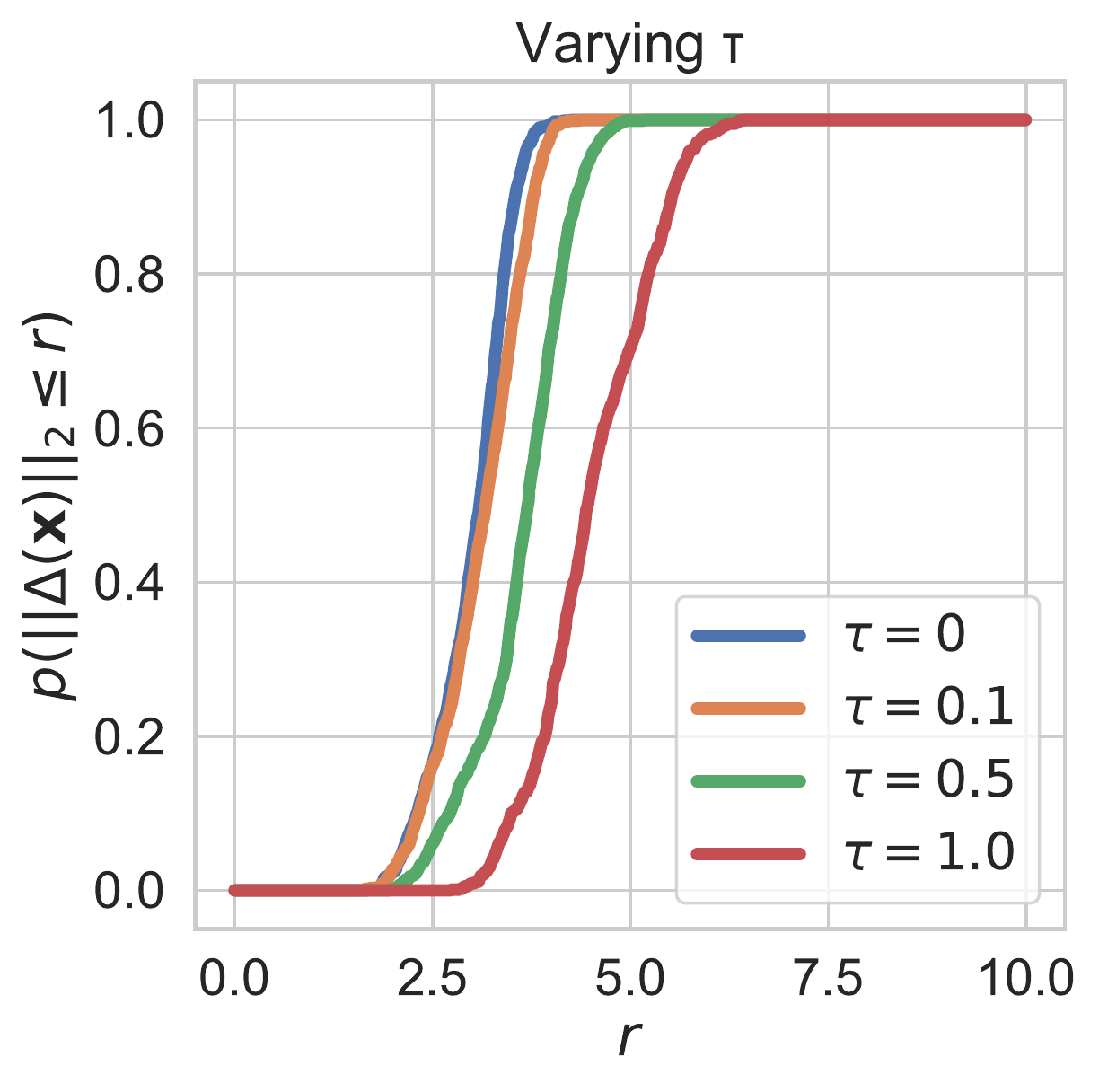}}
    \caption{
    Here we show that the minimum $r$ for which $p(||\Delta(\v{x})||_2 \leq r) = 0.5$ increases with $\beta$ and $\tau$, where $\beta$ is the penalty applied to the $\KL$ in $\beta$-VAEs and $\tau$ is an offset added to the encoder standard deviation $\v{\sigma}_{\phi}(\v{x})$.
    This probability, estimated as detailed below in Appendix \ref{app:emp_calc_algo}, increases with $r$, but 
    increases more slowly for large $\beta$ (a) and large $\tau$ (b).
    In such models the encoding process has higher variance resulting in a greater spread of reconstructions, confirming Proposition \ref{prop:r-bound} in Appendix A that the minimum $r$ for $r$-robustness increases with the encoder variance.
    }
    \label{fig:PAR_estimates}
\end{figure}

\subsubsection{Algorithm}
\label{app:emp_calc_algo}

\begin{algorithm}[H]
 \caption{Estimating $r$}
\SetAlgoLined
\KwResult{$r$ such that  $p(||\Delta(\v{x})||_2 \leq r)>0.5$}
 $m$, $step$, $samples$, $\v{x}$,  $r \gets 0$, $p(||\Delta(\v{x})||_2 \leq r) \gets 0$\;
 \While{$p(||\Delta(\v{x})||_2 \leq r) < m$}{
 $d \gets \{\}$\;
 \For{$i\gets1$ \KwTo $samples$ \KwBy $1$}{
  $\v{s} \sim \mathcal{N}(\v{\mu}_{\phi}(\v{x}), \v{\sigma}_{\phi}(\v{x}))$\;
 $s_d\gets ||g_{\theta}(\v{s}) - g_{\theta}(\v{\mu}_\phi(\v{x}))||_2$\;
  $d$.insert($s_d$)\; 
  }
  $r \gets r + step$ \;
  $p(||\Delta(\v{x})||_2 \leq r) \gets \frac{\mathrm{Sum}(d < r)}{nsamples}$\;
 }
\end{algorithm}

\subsection{Estimating $R^r_{\mathcal{X}}(\v{x})$}
\label{app:emp_calc_R}

\begin{algorithm}[H]
 \caption{Estimating $R^r_{\mathcal{X}}(\v{x})$}
\SetAlgoLined
\KwResult{$R^r_{\mathcal{X}}(\v{x})$ such that  $p(||\Delta(\v{x}, \v{\delta}_x)||_2 \leq r)>0.5$}
 $step$, $samples$, $\v{x}$,  $r$, $p(||\Delta(\v{x}, \v{\delta}_x)||_2 \leq r) \gets 0$, $R^r_{\mathcal{X}}(\v{x})\gets 10$, $restarts\gets 5$ \;
 \While{$p(||\Delta(\v{x}, \v{\delta}_x)||_2 \leq r) < 0.5$}{
 \For{$j\gets1$ \KwTo $restarts$ \KwBy $1$}{
 $d \gets \{\}$\;
 \For{$i\gets1$ \KwTo $samples$ \KwBy $1$}{
  $\v{\delta}_x \gets$ max damage attack constrained to the norm $R^r_{\mathcal{X}}(\v{x})$\;
  $\v{s} \sim \mathcal{N}(\v{\mu}_\phi(\v{x} + \v{\delta}_x), \v{\sigma}_\phi(\v{x} + \v{\delta}_x))$ \\
 $s_d\gets ||g_{\theta}(\v{s}) - g_{\theta}(\v{\mu}_\phi(\v{x}))||_2$\;
  $d$.insert($s_d$)\; 
  }
  $R^r_{\mathcal{X}}(\v{x}) \gets R^r_{\mathcal{X}}(\v{x}) - step$\;
  $p(||\Delta(\v{x}, \v{\delta}_x)||_2 \leq r) \gets \frac{\mathrm{Sum}(d < r)}{nsamples}$\;}
  }
\end{algorithm}

\newpage
\section{$\beta$-VAE Sensitivity Experiments}
\label{app:bvae_exp}

\begin{figure}[h!]
         \centering
    \subfloat[][$\beta=0.1$]{\includegraphics[width=0.2\textwidth]{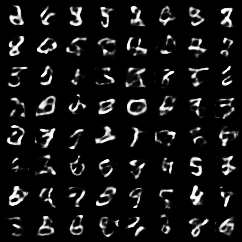}} 
    \subfloat[][$\beta=0.5$]{\includegraphics[width=0.2\textwidth]{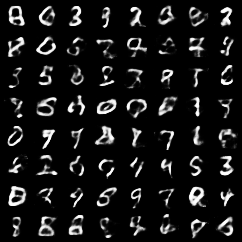}} 
    \subfloat[][$\beta=1$]{\includegraphics[width=0.2\textwidth]{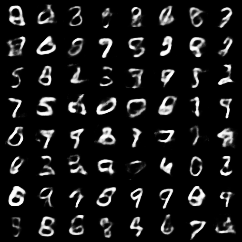}} 
    \subfloat[][$\beta=5$]{\includegraphics[width=0.2\textwidth]{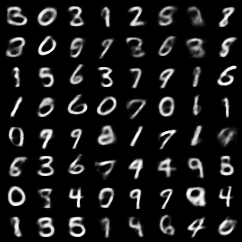}} 
     \subfloat[][$\beta=10$]{\includegraphics[width=0.2\textwidth]{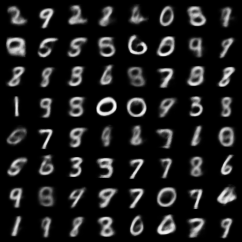}} \\
     \subfloat[][VAE Sensitivity]{\includegraphics[width=0.25\textwidth]{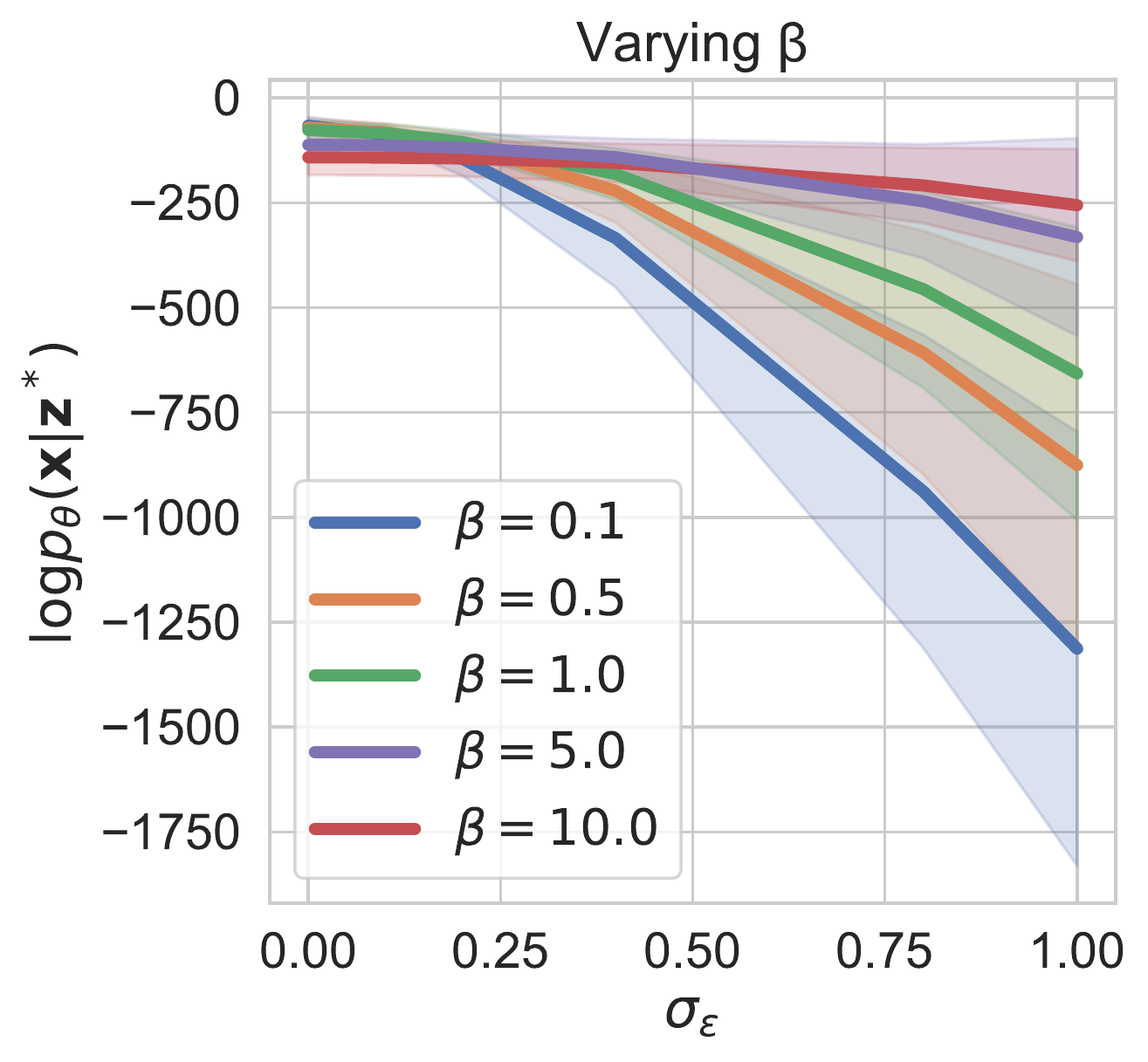}} 
    \subfloat[][Encoder Variance]{\includegraphics[width=0.25\textwidth]{figures/mnist/mnist_beta_sig.pdf}} 
     \subfloat[][Encoder Jacobian]{\includegraphics[width=0.25\textwidth]{figures/mnist/mnist_beta_norm_enc.pdf}} 
     \subfloat[][$R^r_{\mathcal{X}}(\v{x})$ bound]{\includegraphics[width=0.25\textwidth]{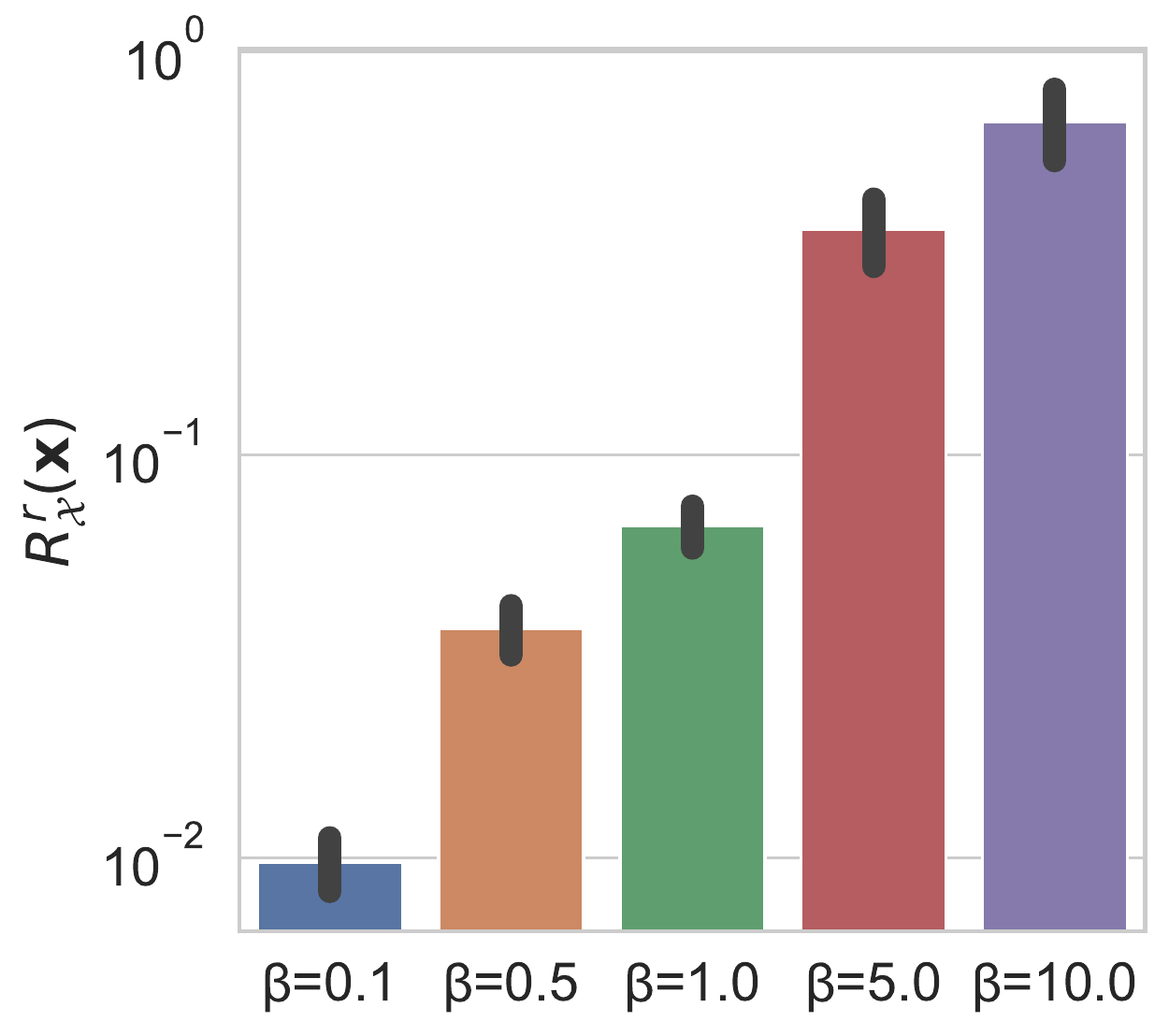}} \\
    
    \caption{Here we illustrate that $\beta$-VAEs, trained MNIST, with higher $\beta$ penalties generalise better and are less sensitive to input perturbations. 
    The first row (a)-(e) shows samples drawn from the latent space prior that are then fed through the VAE decoder. 
    It is clear that as $\beta$ increases, so too does the quality of generated samples.
    (f) shows the sensitivity of the VAE to input perturbations. We add zero-mean Gaussian noise of variance $\sigma^2_{\epsilon}$ to the VAE input to form a noisy input $\v{x}^*$ and embedding $\v{z}*$. We then measure the likelihood of the original point $\v{x}$ under this noisy embedding. 
    $\sigma^2_{\epsilon}$ is thus an approximation of the margin of robustness of the VAE, if the VAE's likelihood does not change even for high variance noise, it must have a large margin of robustness ($R^r_\mathcal{X}(\v{x})$). 
    The likelihood of $\v{x}$ is quasi constant, under increasing noise variance, for high values of $\beta$. This supports our analysis that such models have higher $R^r_\mathcal{X}(\v{x})$. 
    Figures (g) and (h) show that the encoder variance and that the norm of the encoder Jacobian ($||\v{J}^{\mu}_{\phi}(\v{x})||_F$) increase as $\beta$ increases, supporting our analysis that the changes in these values underpin the robustness observed.
     In (i) we calculate the bound for $R^r_{\mathcal{X}}(\v{x})$ from Theorem \ref{prop:margin_x} where we ignore higher order terms.
    We select $r$ such that $p_{A^r}(\v{x})=0.9$, which is a relatively strict metric for robustness.
    In (f-i) confidence intervals correspond to the standard deviations of values over the entire MNIST dataset.
    Taken as a whole these experiments support our analysis that the margin $R^r_{\mathcal{X}}(\v{x})$ increases with $\beta$ as in Theorem \ref{prop:beta_optim}, in conjuction with the norm of the encoder Jacobian and the encoder variance, supporting Theorem \ref{prop:margin_x}.}
     \label{fig:mnist_experiments_beta_app}
\end{figure}

\begin{figure}[h!]
         \centering
    \subfloat[][$\beta=0.1$]{\includegraphics[width=0.2\textwidth]{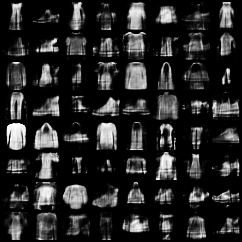}} 
    \subfloat[][$\beta=0.5$]{\includegraphics[width=0.2\textwidth]{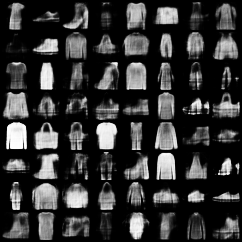}} 
    \subfloat[][$\beta=1$]{\includegraphics[width=0.2\textwidth]{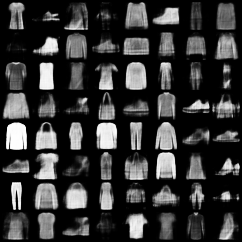}} 
    \subfloat[][$\beta=5$]{\includegraphics[width=0.2\textwidth]{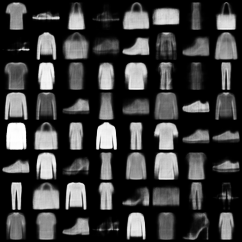}} 
     \subfloat[][$\beta=10$]{\includegraphics[width=0.2\textwidth]{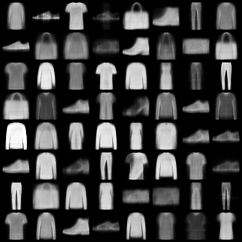}} \\
     \subfloat[][VAE Sensitivity]{\includegraphics[width=0.25\textwidth]{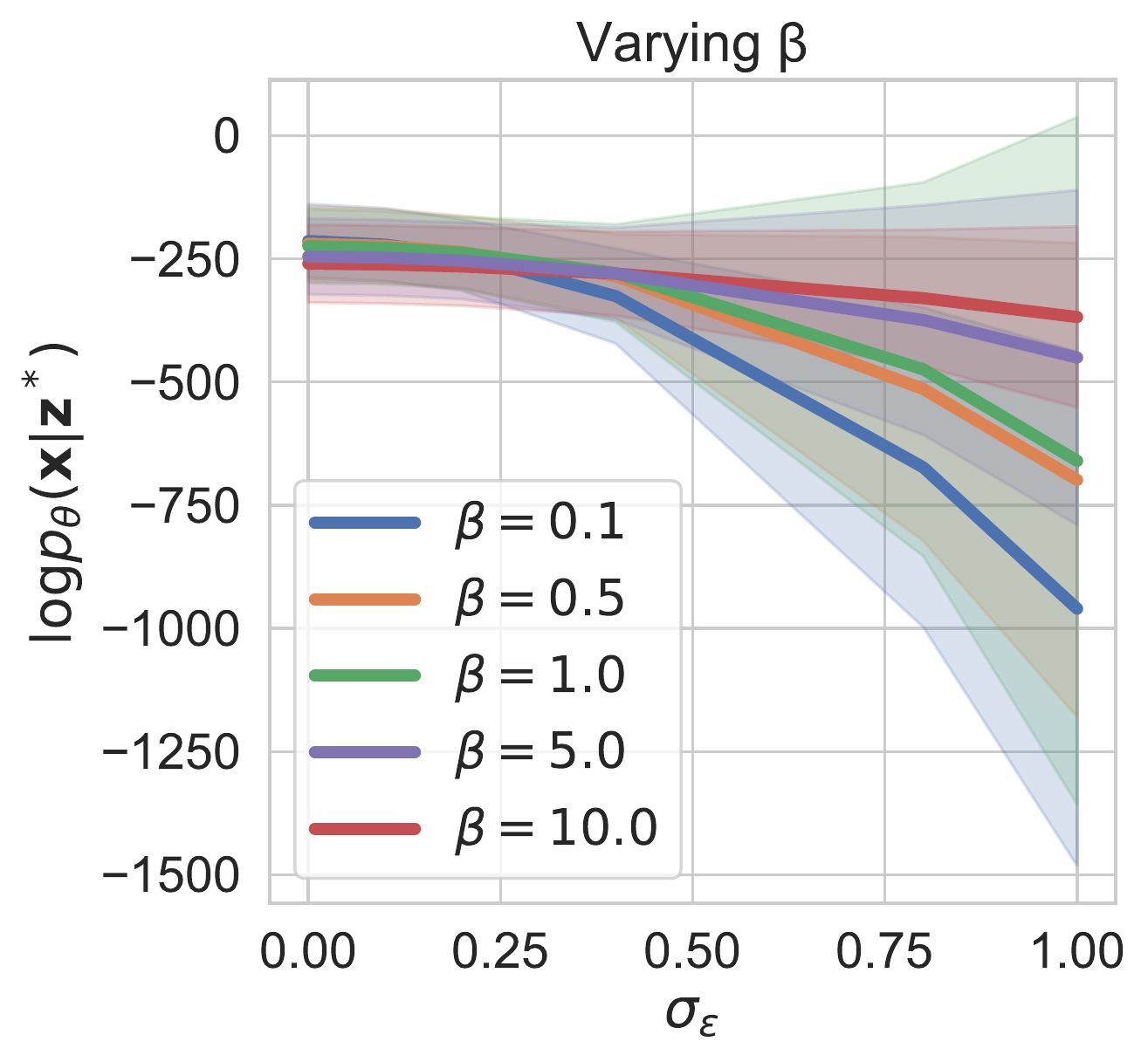}} 
    \subfloat[][Encoder Variance]{\includegraphics[width=0.25\textwidth]{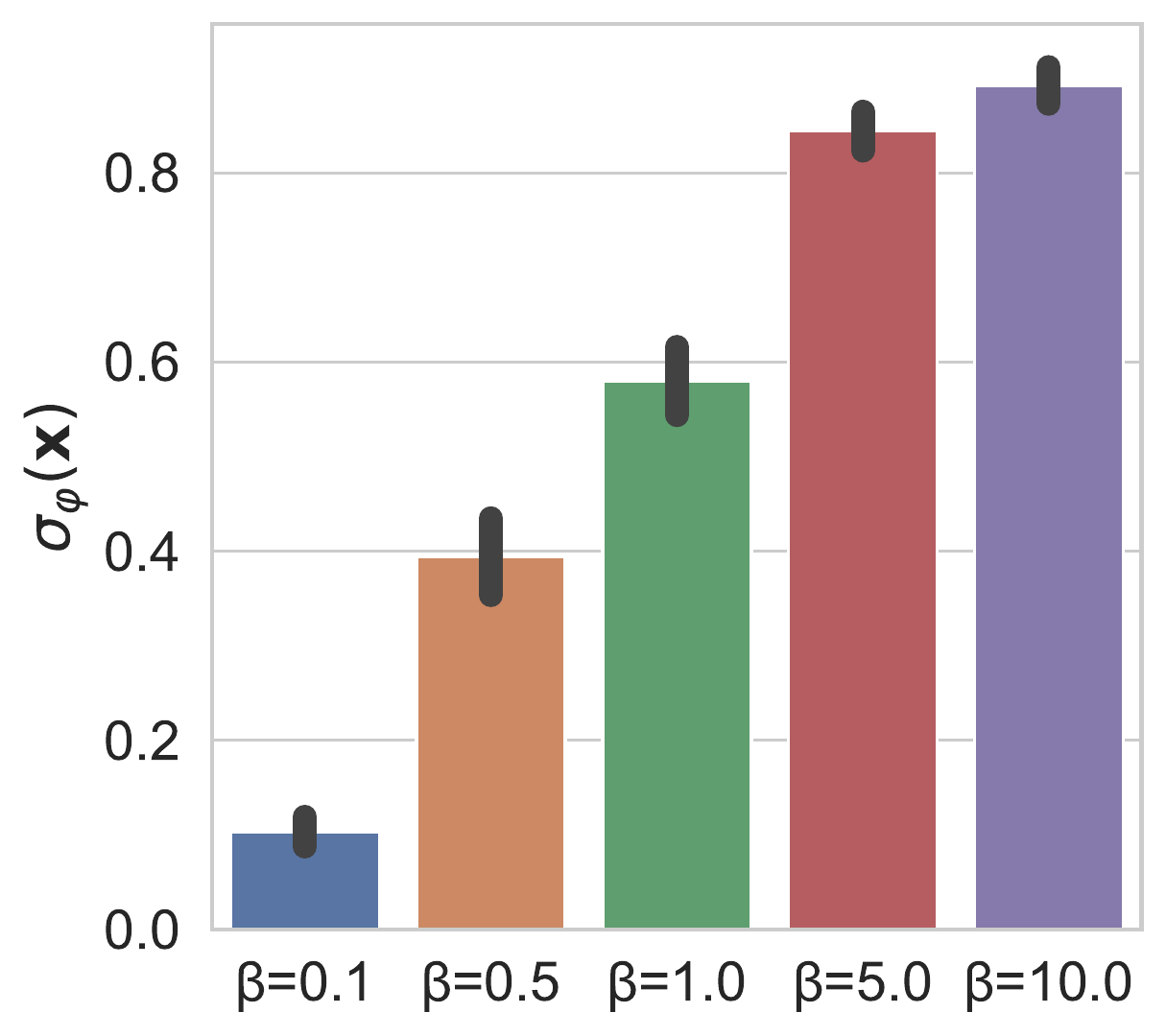}} 
     \subfloat[][Encoder Jacobian]{\includegraphics[width=0.25\textwidth]{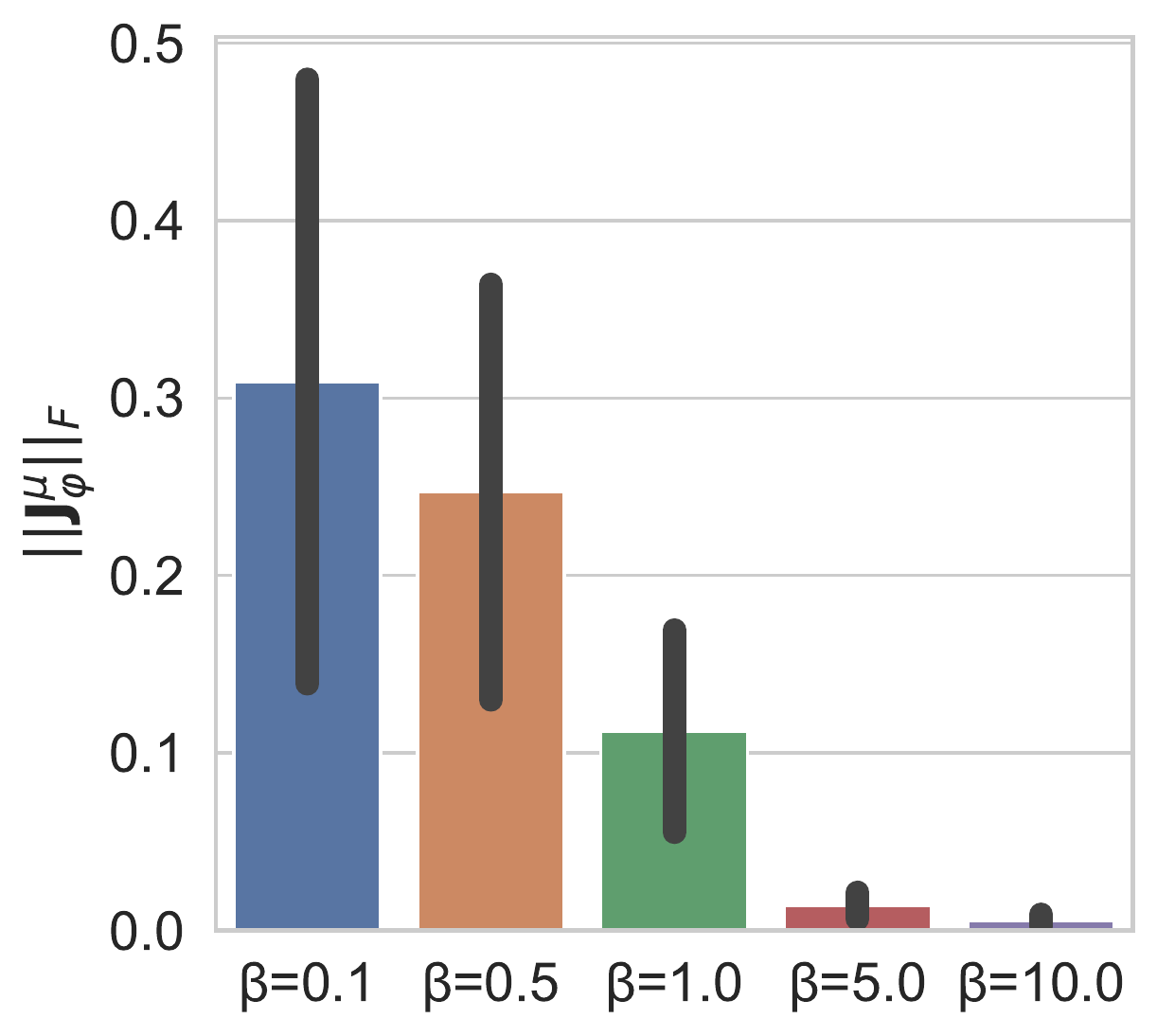}} 
     \subfloat[][$R^r_{\mathcal{X}}(\v{x})$ bound]{\includegraphics[width=0.25\textwidth]{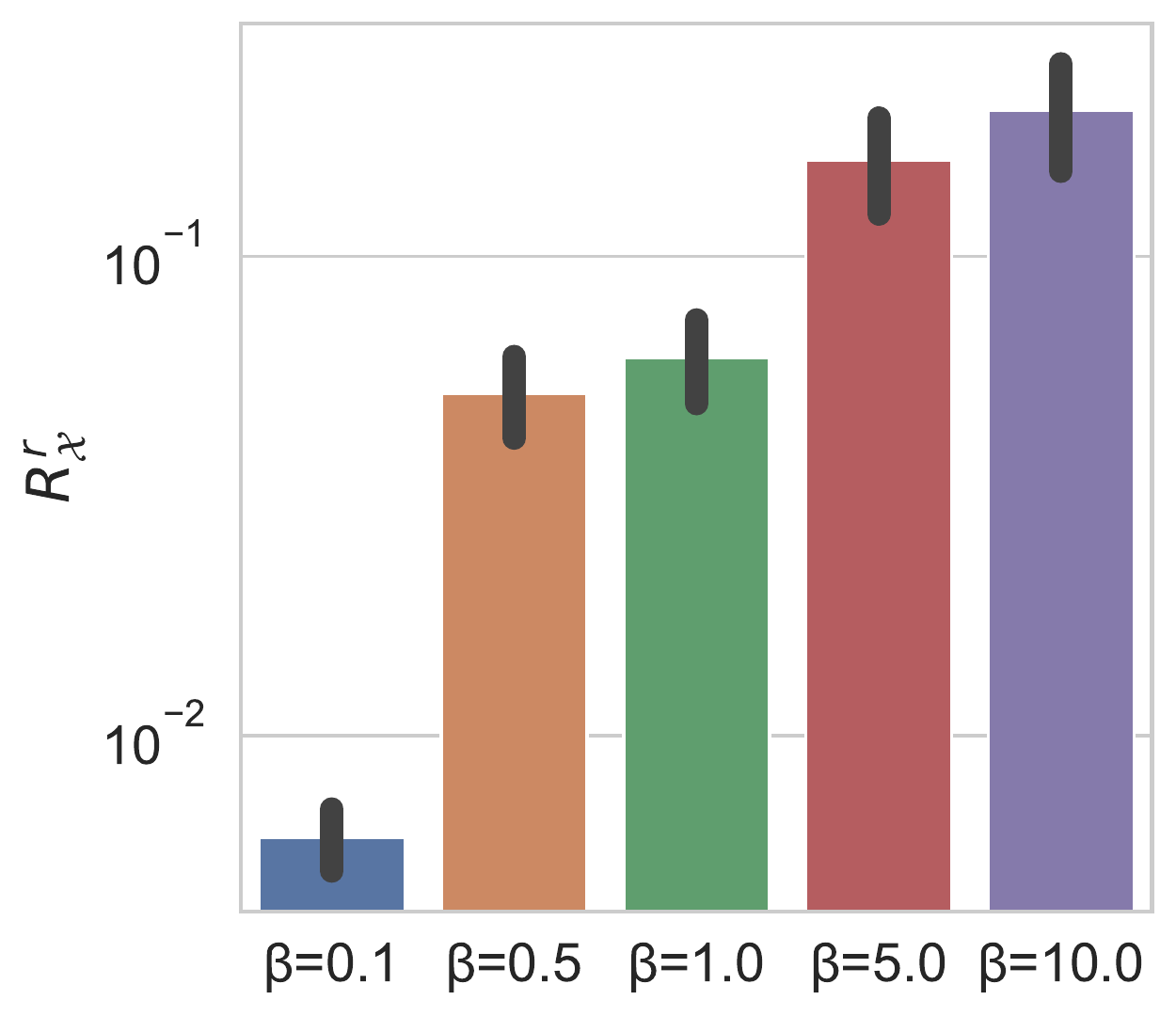}}\\
    
    \caption{Here we illustrate that $\beta$-VAEs, trained on fashion-MNIST, with higher $\beta$ penalties generalise better and are less sensitive to input perturbations. 
    The first row (a)-(e) shows samples drawn from the latent space prior that are then fed through the VAE decoder. 
    It is clear that as $\beta$ increases, so too does the quality of generated samples. 
    (f) shows the sensitivity of the VAE to input perturbations. We add zero-mean Gaussian noise of variance $\sigma^2_{\epsilon}$ to the VAE input to form a noisy input $\v{x}^*$ and embedding $\v{z}*$. We then measure the likelihood of the original point $\v{x}$ under this noisy embedding. 
    $\sigma^2_{\epsilon}$ is thus an approximation of the margin of robustness of the VAE, if the VAE's likelihood does not change even for high variance noise, it must have a large margin of robustness ($R^r_\mathcal{X}(\v{x})$). 
    The likelihood of $\v{x}$ is quasi constant, under increasing noise variance, for high values of $\beta$. This supports our analysis that such models have higher $R^r_\mathcal{X}(\v{x})$. 
    Figures (g) and (h) show that the encoder variance and that the encoder Jacobian norm ($||\v{J}^{\mu}_{\phi}(\v{x})||_F$) increase as $\beta$ increases, supporting our analysis that the changes in these values underpin the robustness observed.
     In (i) we calculate the bound for $R^r_{\mathcal{X}}(\v{x})$ from Theorem \ref{prop:margin_x} where we ignore higher order terms.
    We select $r$ such that $p_{A^r}(\v{x})=0.9$, which is a relatively strict metric for robustness.
    In (f-i) confidence intervals correspond to the standard deviations of values over the entire fashion-MNIST dataset.
    Taken as a whole these experiments support our analysis that the margin $R^r_{\mathcal{X}}(\v{x})$ increases with $\beta$ as in Theorem \ref{prop:beta_optim}, in conjuction with the norm of the encoder Jacobian and the encoder variance, supporting Theorem \ref{prop:margin_x}.}
     \label{fig:fmnist_experiments_beta_app}
\end{figure}

\begin{figure}[h!]
         \centering
    \subfloat[][Encoder Variance]{\includegraphics[width=0.25\textwidth]{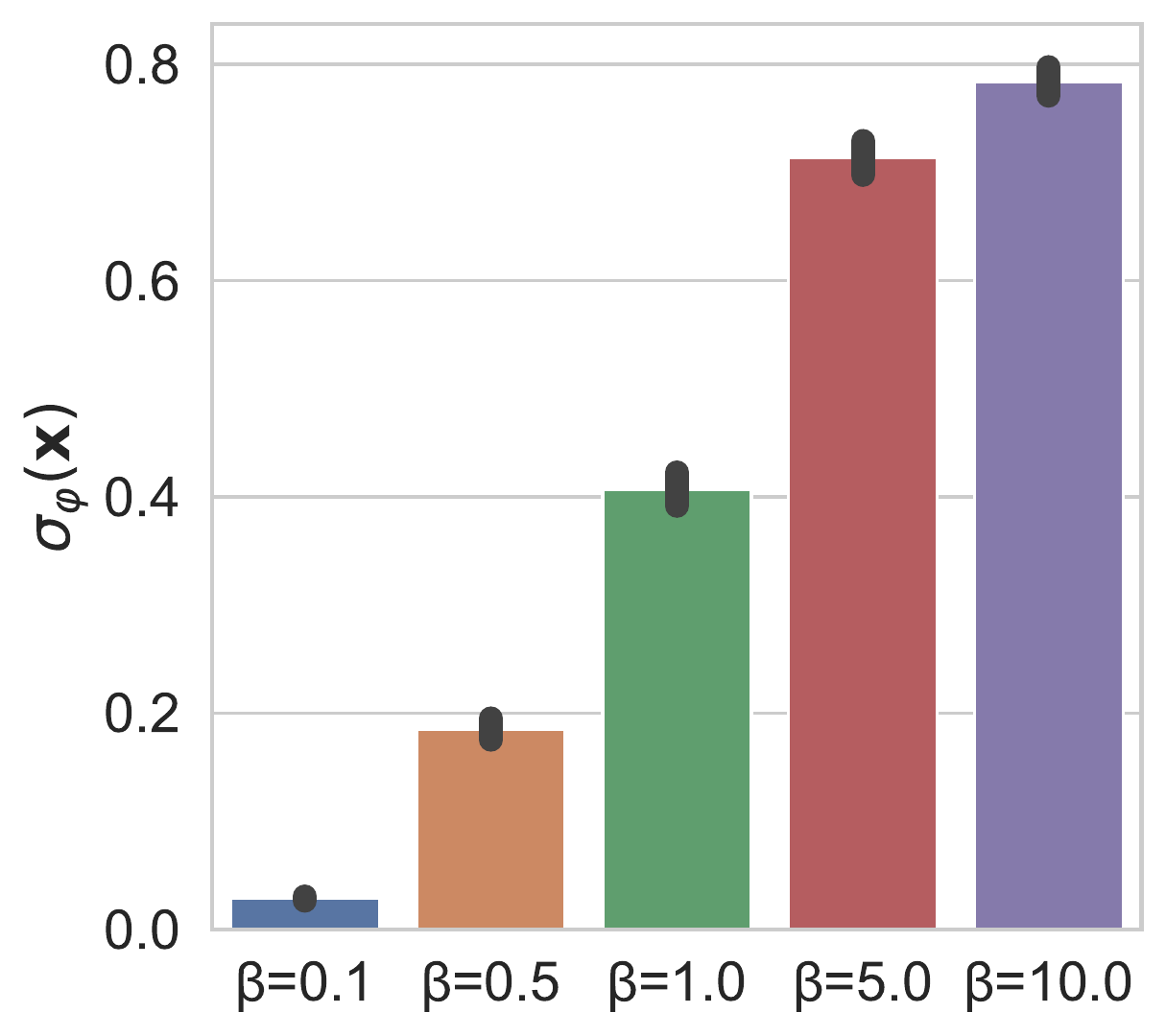}} 
     \subfloat[][Encoder Jacobian]{\includegraphics[width=0.25\textwidth]{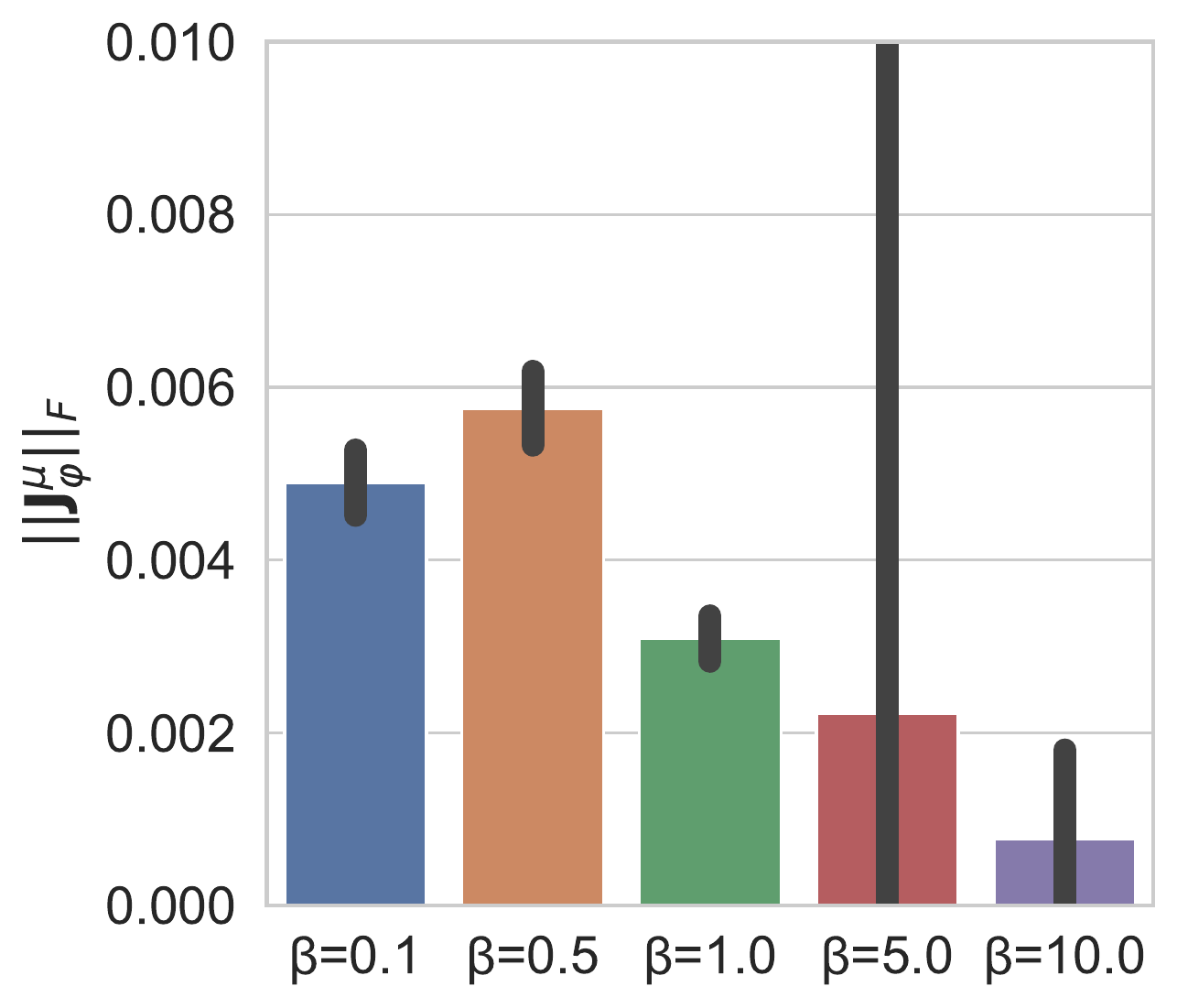}} 
     \subfloat[][$R^r_{\mathcal{X}}(\v{x})$ bound]{\includegraphics[width=0.25\textwidth]{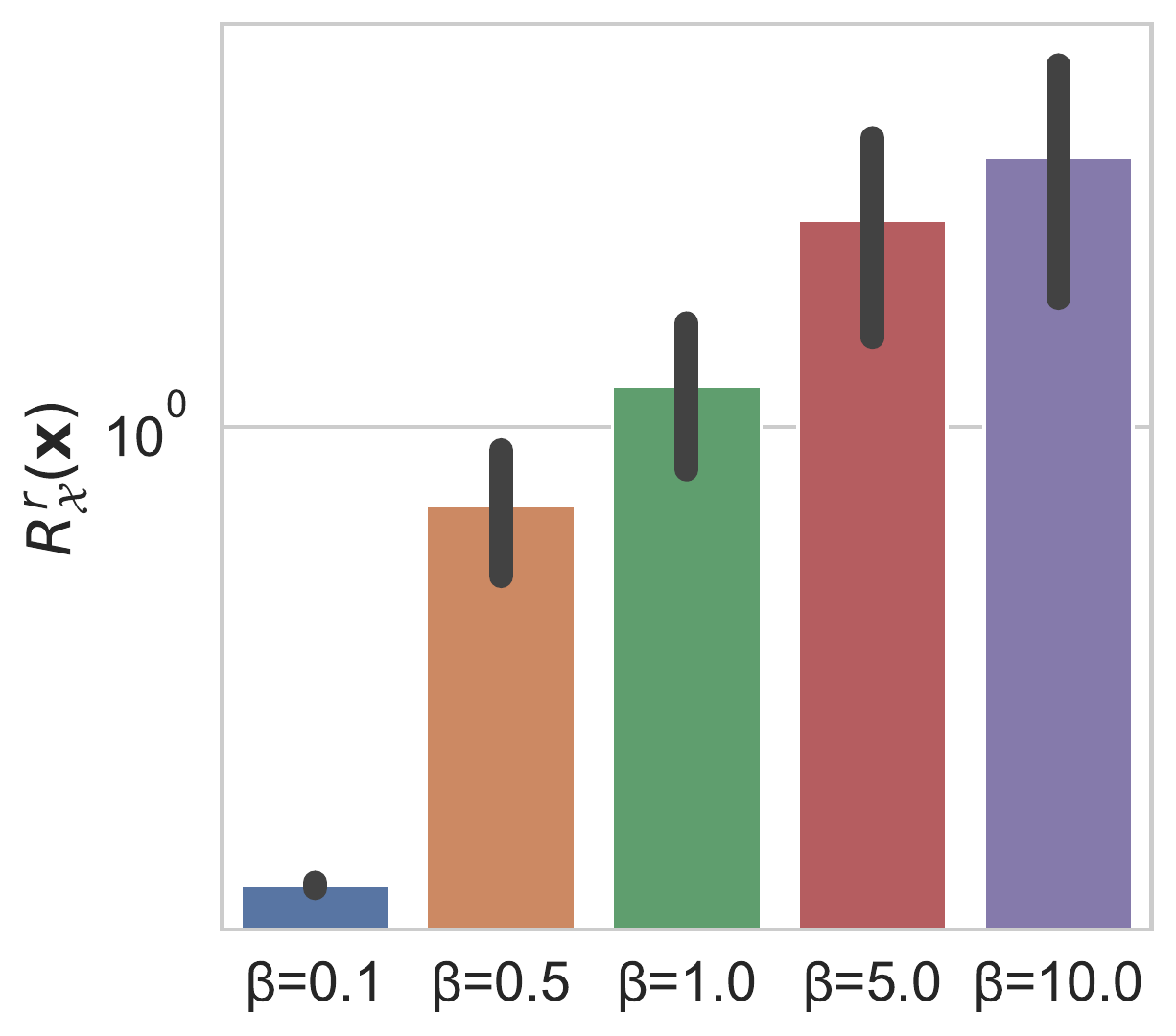}}\\
    
    \caption{Here we illustrate that $\beta$-VAEs, trained on CIFAR10, with higher $\beta$ have larger margins of robustness. 
    Figures (a) and (b) show that the encoder variance and that the encoder Jacobian norm ($||\v{J}^{\mu}_{\phi}(\v{x})||_F$) increase as $\beta$ increases, supporting our analysis that the changes in these values underpin the robustness observed.
     In (i) we calculate the bound for $R^r_{\mathcal{X}}(\v{x})$ from Theorem \ref{prop:margin_x} where we ignore higher order terms.
    We select $r$ such that $p_{A^r}(\v{x})=0.9$, which is a relatively strict metric for robustness.
    In (a-c) confidence intervals correspond to the standard deviations of values over the entire dataset.
    Taken as a whole these experiments support our analysis that the margin $R^r_{\mathcal{X}}(\v{x})$ increases with $\beta$ as in Theorem \ref{prop:beta_optim}, in conjuction with the norm of the encoder Jacobian and the encoder variance, supporting Theorem \ref{prop:margin_x}.}
     \label{fig:cifar10_experiments_beta_app}
\end{figure}

\begin{figure}[h!]
         \centering
    \subfloat[][$\beta=0.1$]{\includegraphics[width=0.31\textwidth]{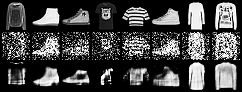}} 
    \hspace{1mm}
    \subfloat[][$\beta=1$]{\includegraphics[width=0.31\textwidth]{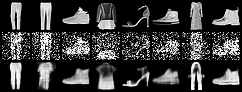}} 
    \hspace{1mm}
    \subfloat[][$\beta=10$]{\includegraphics[width=0.31\textwidth]{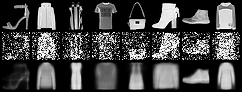}}

    \caption{ We show reconstructions of noisy data for VAEs trained with $\beta\in\{0.1,1,10\}$ on Fashion-MNIST.
    The first row corresponds to the original image, the second to noised a image $\v{x} + \v{\epsilon}$ where $\v{\epsilon} \sim \mathcal{N}(0, (0.5^2)\v{I})$.
    Clearly larger $\beta$ models are less sensitive to noise, supporting our analysis that increasing $\beta$ increases the margin of robustness to perturbations. }

\end{figure}

\clearpage
\newpage
\section{Network Hyperparameters}

All networks used the same hyperparameters.
Namely networks were trained for 100 epochs with the Adam optimizer, with a learning rate of 0.001 and a batch size of 512. 

For MNIST and fashion-MNIST networks for the encoder variance and encoder mean were two hidden layer multi-layer perceptrons (MLPs) with 400 units per layer, which shared their first layer. 
Similarly the decoder was a two layer MLP with 400 units per layer. 
For these datasets we used a latent space size of 20. 

For CIFAR10 we used 4-layer MLPs with 400 units per layer for the  encoder and decoder networks and used a 64-dimensional latent space.

\end{document}